\def\@biblabel#1{#1.}
\begin{document}
\title{Learning Sparse Neural Networks with Identity Layers}
%
%
\author{Mingjian Ni\inst{1} \and
Guangyao Chen\inst{1} \and
Xiawu Zheng\inst{2} \and
Peixi Peng\inst{1} \and
Li Yuan\inst{1} \and
Yonghong Tian\inst{1} \textsuperscript{(\Letter)}}
\authorrunning{M. Ni et al.}
%
\institute{Peking University, Beijing 100871, China \\ \email{\{sccdnmj, gy.chen, pxpeng, yuanli-ece, yhtian\}@pku.edu.cn} \and
Peng Cheng Laboratory, Shenzhen 518055, China \\ \email{zhengxw01@pcl.ac.cn}}

\maketitle              
\begin{abstract}
  The sparsity of Deep Neural Networks is well investigated to maximize the performance and reduce the size of overparameterized networks as possible. Existing methods focus on pruning parameters in the training process by using thresholds and metrics. Meanwhile, feature similarity between different layers has not been discussed sufficiently before, which could be rigorously proved to be highly correlated to the network sparsity in this paper. 
  Inspired by interlayer feature similarity in overparameterized models, we investigate the intrinsic link between network sparsity and interlayer feature similarity. Specifically, we prove that reducing interlayer feature similarity based on Centered Kernel Alignment (CKA) improves the sparsity of the network by using information bottleneck theory.
  Applying such theory, we propose a plug-and-play \textbf{CKA}-based \textbf{S}parsity \textbf{R}egularization for sparse network training, dubbed CKA-SR, which utilizes CKA to reduce feature similarity between layers and increase network sparsity. In other words, layers of our sparse network tend to have their own identity compared to each other. 
  Experimentally, we plug the proposed CKA-SR into the training process of sparse network training methods and find that CKA-SR consistently improves the performance of several State-Of-The-Art sparse training methods, especially at extremely high sparsity. 
  Code is included in the supplementary materials.

\keywords{Network sparsity \and Inter-layer feature similarity \and Network compression.}
\end{abstract}

\section{Introduction}
\label{sec:intro}

Deep Neural Networks (DNNs) achieve great success on many important tasks, including but not limited to computer vision and natural language processing. Such accurate solutions highly rely on overparameterization, which results in a tremendous waste of resources. A variety of methods are proposed to solve such issues, including model pruning~\cite{han2015deep, han2015learning, he2019filter} and sparse training~\cite{bai2022dual, frankle2018lottery, liu2021unreasonable, wang2020picking}. Sparse training aims to train a sparse network from scratch, which reduces both training and inference expenses.

A recent study~\cite{nguyen2020wide} shows the close relation between overparameterization and interlayer feature similarity (\emph{i.e.} similarity between features of different layers, as shown in Figure \ref{fig:base} ). Specifically, overparameterized models possess obviously greater similarity between features of different layers. Concluding from the facts above, we know that both interlayer feature similarity and network sparsity are deeply related to overparameterization. Inspired by this, we utilize the interlayer feature similarity to increase network sparsity and preserve accuracy at a high level, namely by adopting similarity methods to solve sparsity problems.

Following this path, we survey similarity measurements of features, including Canonical Correlation Analysis (CCA)~\cite{hardoon2004canonical, hotelling1992relations, ramsay1984matrix} and Centered Kernel Alignment (Linear-CKA and RBF-CKA)~\cite{kornblith2019similarity}, etc. Among these measurements, CKA measurement is advanced and robust, for it reliably identifies correspondences between representations in networks with different widths trained from different initializations. Theoretically, CKA measurement has many good properties, including invariance to orthogonal transform and isotropic scaling, and close correlation with mutual information~\cite{zheng2021information}. The advantages of CKA make it possible to propose robust methods to solve sparsity problems with interlayer feature similarity. 

To this end, we propose \textbf{CKA}-based \textbf{S}parsity \textbf{R}egularization (CKA-SR) by introducing the CKA measurement into training loss as a regularization term, which is a plug-and-play term and forces the reduction of interlayer feature similarity. Besides, we further prove that the proposed CKA-SR increases the sparsity of the network by using information bottleneck(IB) theory~\cite{saxe2019information, tishby2000information, tishby2015deep, zheng2021information}. Specifically, we mathematically prove that our CKA-SR reduces the mutual information between the features of the intermediate and input layer, which is one of the optimization objectives of the information bottleneck method. Further, we prove that reducing the mutual information above is equivalent to increasing network sparsity. By these proofs, we demonstrate the equivalence of reducing interlayer feature similarity and increasing network sparsity, which heuristically investigates the intrinsic link between interlayer feature similarity and network sparsity.

\begin{figure*}[!tb]
	\centering
	\subfigure[w/o CKA-SR]{
		\begin{minipage}[t]{0.3\linewidth}
			\centering
			\includegraphics[width=\linewidth]{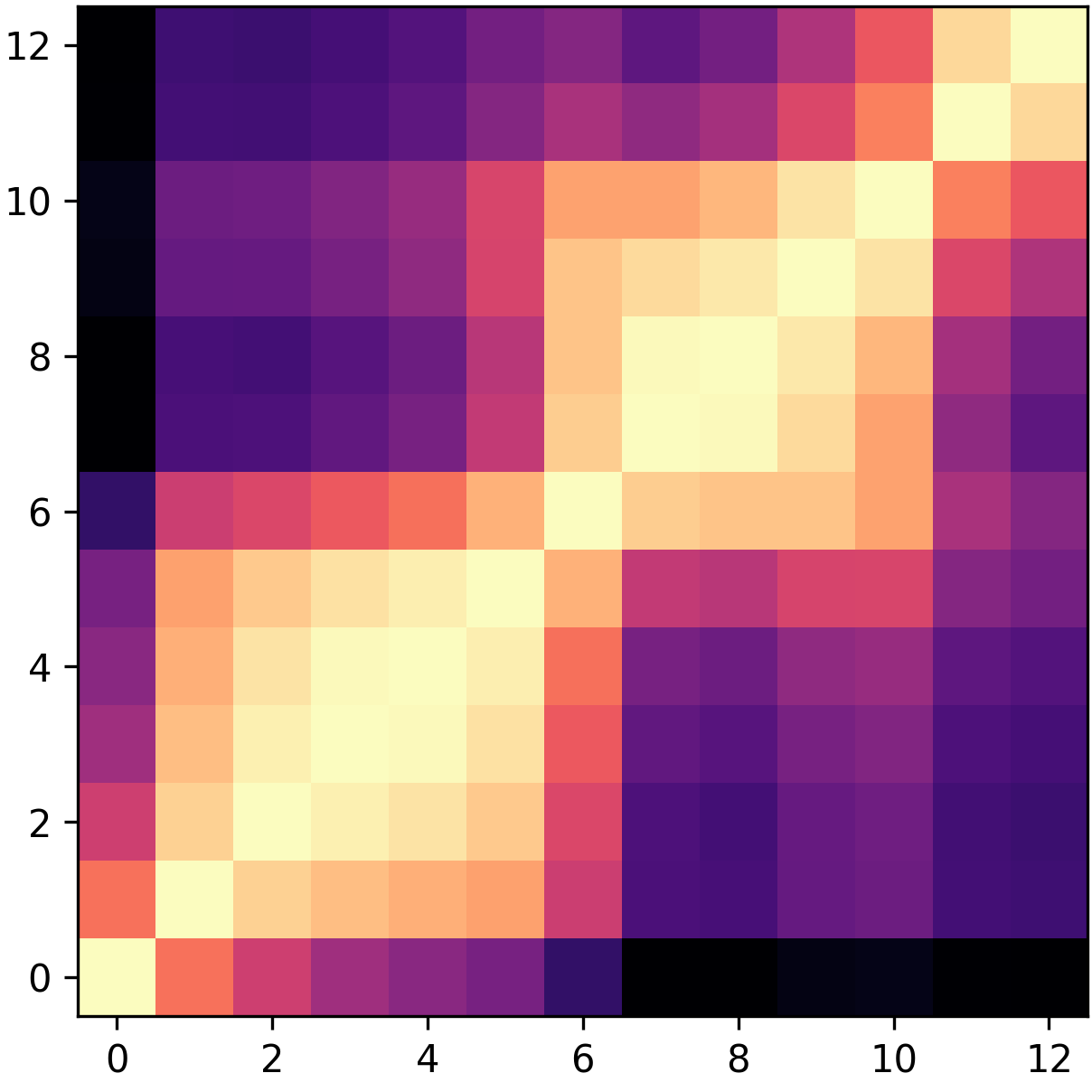}
			\label{fig:base}
		\end{minipage}%
	}%
	\subfigure[w/ CKA-SR]{
		\begin{minipage}[t]{0.3\linewidth}
			\centering
			\includegraphics[width=\linewidth]{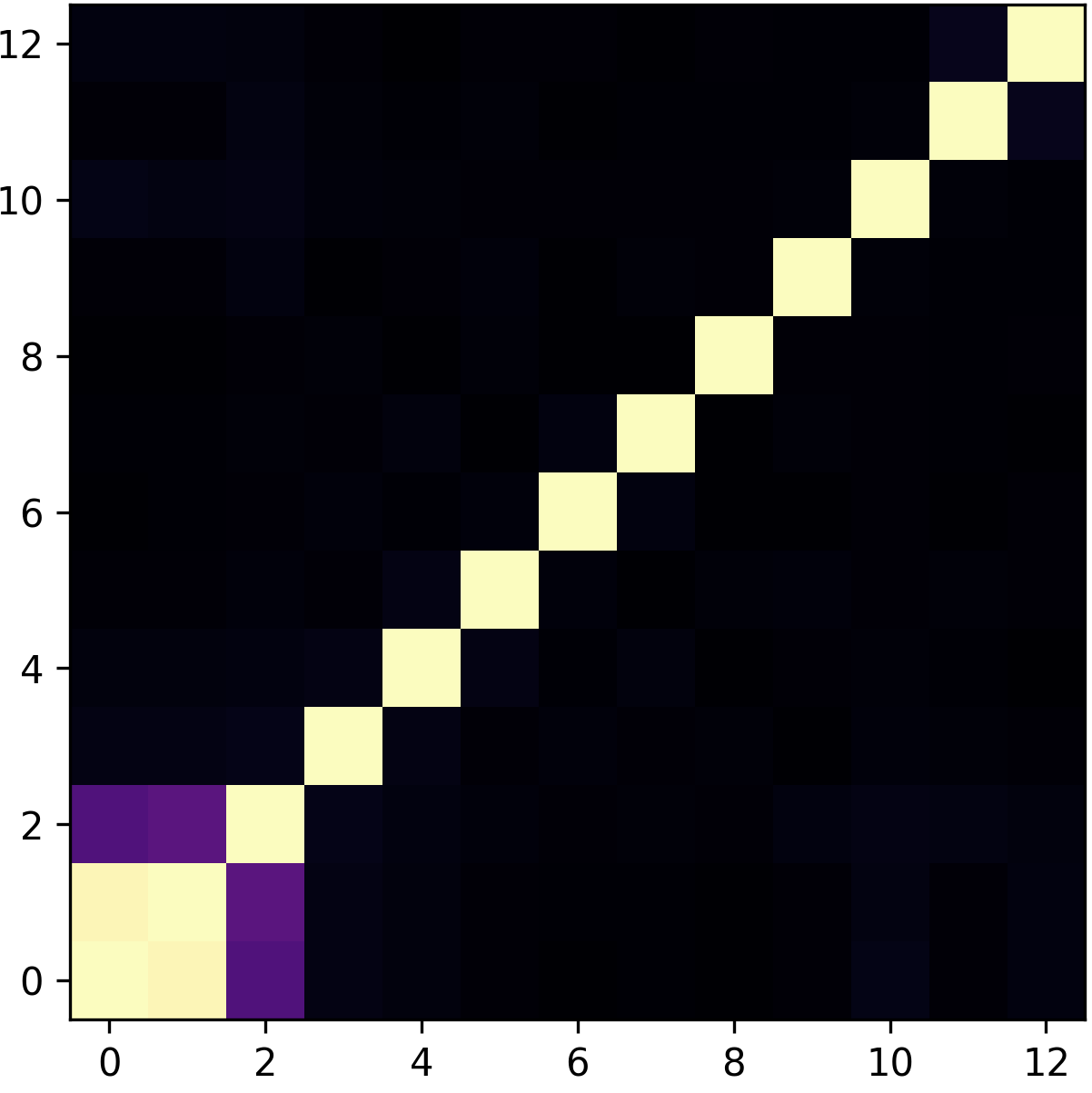}
			\label{fig:cka}
		\end{minipage}%
	}%
	\subfigure[weight distribution]{
		\begin{minipage}[t]{0.3\linewidth}
			\centering
			\includegraphics[width=\linewidth]{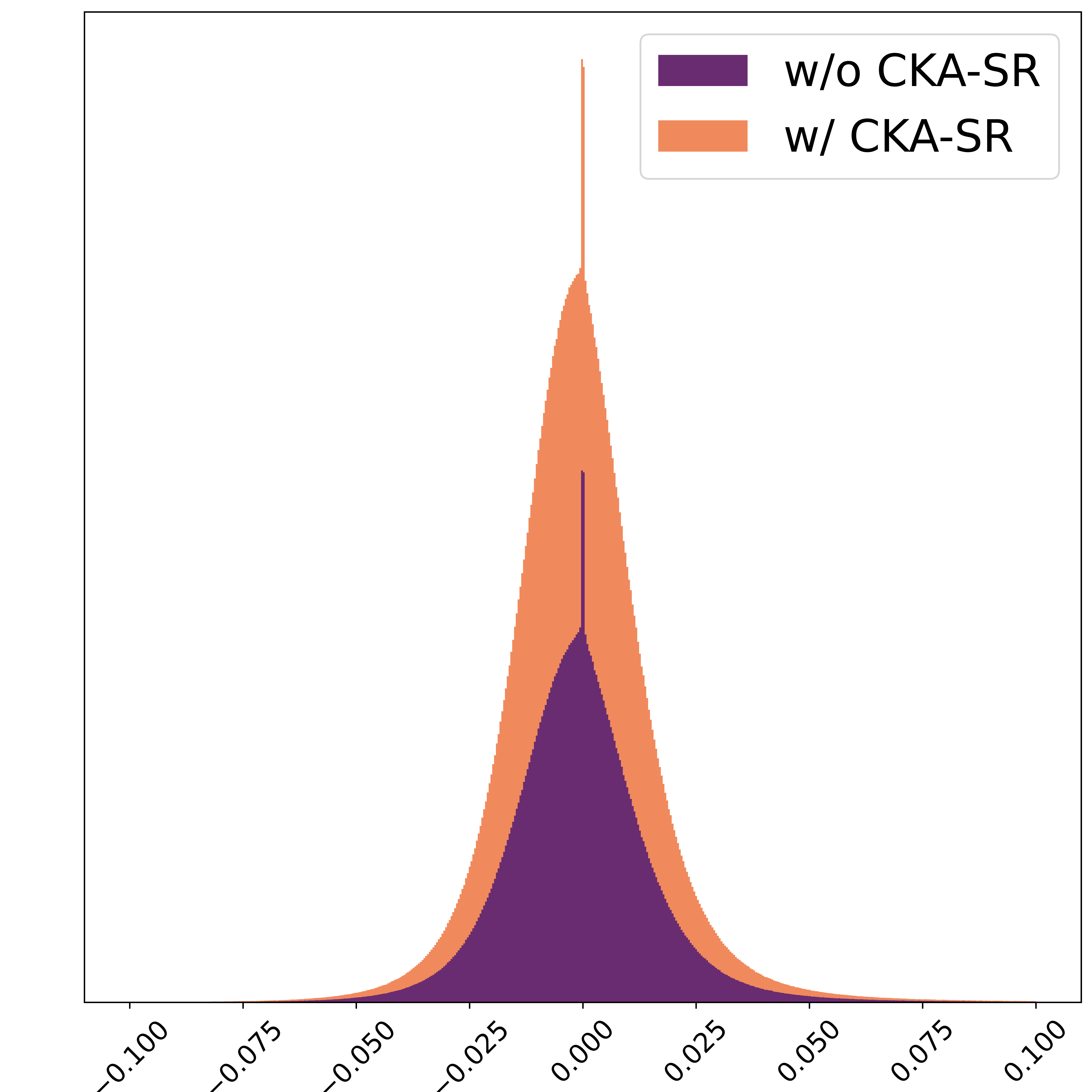}
			\label{fig:distribution}
		\end{minipage}%
	}%
	\subfigure{
		\begin{minipage}[t]{0.06\linewidth}
			\centering
			\includegraphics[width=\linewidth]{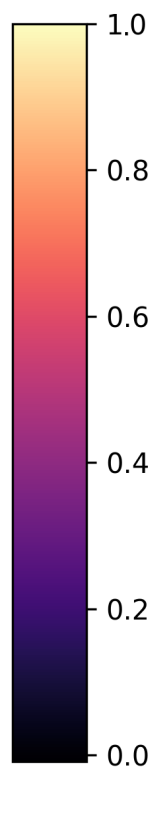}
		\end{minipage}%
	}%
	\caption{Reduction of interlayer feature similarity with CKA-SR. (a) Interlayer feature similarity visualization of baseline models. (b) Interlayer feature similarity visualization of models pre-trained with CKA-SR. (c) Comparison of weight distribution between baseline and CKA-SR models.}
	\label{fig:vis}
	\vspace{-10pt}
\end{figure*}

To validate the proposed CKA-SR, we conduct experiments on several advanced sparse training methods, such as Lottery Ticket Hypothesis (LTH) \cite{frankle2018lottery}, Gradient Signal Preservation (GraSP) \cite{wang2020picking}, Dual Lottery Ticket Hypothesis (DLTH) \cite{bai2022dual}, and Random Sparse Training \cite{liu2021unreasonable}. Specifically, we introduce our CKA-SR regularization to the training process of these sparse training methods and thus achieve consistent performance gains across these methods. Moreover, we introduce CKA-SR to the training and finetuning process of network pruning methods such as l1-norm filter pruning \cite{li2016pruning}, non-structured weight-level pruning~\cite{han2015learning}, and knapsack channel pruning \cite{aflalo2020knapsack}, and thus achieve performance improvements. In short, CKA-SR boosts the performance of sparse training and network pruning methods. Appendix and codes are included in the supplementary materials. See them in \href{https://anonymous.4open.science/r/Learning-Sparse-Neural-Networks-with-Identity-Layers-9369}{https://anonymous.4open.science/r/Learning-Sparse-Neural-Networks-with-Identity-Layers-9369}.

Our contributions are four-fold:

\begin{itemize}
    \item We heuristically investigate the intrinsic link between interlayer feature similarity and network sparsity. To the best of our knowledge, we are the first to find that reducing interlayer feature similarity directly increases network sparsity.
    \item Theoretically, we prove the equivalence of interlayer feature similarity reduction, interlayer mutual information reduction, and network sparsity increment.
    \item We proposed Identity Layers Regularization (ILR) with few-shot samples increases network sparsity and weakens overparameterization by explicitly reducing interlayer feature similarity. Specifically, we implement ILR as CKA-SR.
    \item Experimentally, our CKA-SR regularization term increases network sparsity and improves the performance of multiple sparse training methods and several pruning methods.
\end{itemize}
\section{Related Works and Preliminaries}
\label{sec: pre}

\subsection{Centered Kernel Alignment}
\label{sec: CKA}
Here we provide the formalization of Centered Kernel Alignment (CKA). 
For the feature map $X\in\mathbb{R}^{n\times p_1}$ and feature map $Y\in\mathbb{R}^{n\times p_2}$ (where $n$ is the number of examples, while $p_1$ and $p_2$ are the number of neurons), we use kernels $k$ and $l$ to transform $X$ and $Y$ into $K$ and $L$ matrices, where the elements are defined as: $K_{ij} = k(x_i, x_j), L_{ij} = l(y_i, y_j)$.
Further, the formalization of CKA-based similarity measurement $\mathcal{F}$ of $K$ and $L$ matrices could be formulated as:
\begin{align}
\mathbf{CKA}(K,L) = \frac{\mathrm{HSIC}(K,L)}{\sqrt{\mathrm{HSIC}(K,K)\mathrm{HSIC}(L,L)}}
\end{align}
where $\mathrm{HSIC}$ is the empirical estimator of Hilbert-Schmidt Independence Criterion \cite{gretton2005measuring}. Then, the formalizations of CKA-based similarity measurement for linear kernel $k(x, y) = x^Ty$ 
is as follows: 
\begin{align}
\mathbf{CKA}_{Linear}(X,Y) = \frac{||Y^TX||_F^2}{||X^TX||_F||Y^TY||_F}
\end{align}



\subsection{Interlayer feature similarity of overparameterized models}
\label{sec: interlayer}

Nguyen \textit{et al.}~\cite{nguyen2020wide} investigate the relationship between overparameterized models and similar feature representations. Specifically, wide ResNets, deep ResNets and ResNets trained on small datasets possess extremely similar feature representations between adjacent layers, named block structure. 
Then they infer an empirically verified hypothesis that \textit{overparameterized models possess similar feature representations}. Besides, similar observations also appear in ViT \cite{dosovitskiy2020image} based architectures. 
We may conclude that such block structure is a common problem in different architectures. This prompts us to explore the potential benefits of reducing interlayer feature similarity and learning sparse neural networks with identity layers.
\section{Methodology}
\label{sec: meth}


\subsection{Sparsity regularization based on Centered Kernel Alignment}


As discussed above, the interlayer feature similarity of overparameterized models motivates us to learn sparse neural networks with identity layers. We choose Centered Kernel Alignment (CKA) as the basis of our method, for it's widely applied to measuring feature similarity of different layers. On the other side, the high similarity of layers indicates the overparameterization of Deep Neural Networks. Hence, CKA similarity measurement could be regarded as a scale of overparameterization. This reminds us of directly reducing this measurement to solve overparameterization problems. Even more remarkable, CKA owns many excellent properties, including robustness, invariance to orthogonal transformation, and invariance to scale transformation. These properties make CKA ideal for designing a regularization term to solve overparameterization problems.

Specifically, we add a CKA-based regularization term to the training loss function. For a model with empirical loss (cross-entropy loss) $\mathcal{L_{\mathcal{E}}}$, the training loss with CKA-SR is formalized as:
\begin{align}
    \label{eqn:cka-sr}
    \mathcal{L} = \mathcal{L_{\mathcal{E}}} + \mathcal{L_{\mathcal{C}}} = \mathcal{L_{\mathcal{E}}} + \beta \cdot \sum_{s=1}^{S} \sum_{i=0}^{N_s} \sum_{j=0, j\neq i}^{N_s} w_{ij} \mathbf{CKA}_{Linear}(X_i,X_j)
\end{align}
where $\mathcal{L_{\mathcal{C}}}$ is CKA-SR and $\beta$ is the weight of $\mathcal{L_{\mathcal{C}}}$. $S$ is the number of stages in the network. For networks with only one stage such as DeiTs, $N_s$ is the total number of layers. And for networks with several stages such as ResNets, $N_s$ is the number of layers in each stage $s$. $w_{ij}$ is the weight of CKA measurement between the $i^{th}$ and the $j^{th}$ layer, and it's optional. $X_0$ is the input representation and $X_i$ is the output representation of the $i^{th}$ layer.



The $\mathcal{L_{\mathcal{C}}}$ part in Eq.\eqref{eqn:cka-sr} forcibly reduces the sum of the pairwise similarity of all layers in the network, \emph{i.e.} forcibly reduces the interlayer similarity of the network.


\subsection{Theoretical analysis}
\label{sec: Theoretical Analysis} 

\subsubsection{Approximate sparsity.}
To further explore the relationship between the Frobenius norm of weight matrix and network sparsity, we expand sparsity to approximate sparsity. We define $\epsilon$-sparsity (\emph{i.e.}, approximate sparsity) of a neural network as follows: \\
\begin{align}
\label{Eqn:eS}
    S_{\epsilon} = \frac{|\{w|w \in \mathbb{W} \land |w|<\epsilon\}|}{|\mathbb{W}|}
\end{align}
where $\epsilon$ is a number close to zero, $\mathbb{W}$ is the set consisting of all parameters of the network's weight matrix, $|\mathbb{W}|$ is the total number of parameters, and $\{w|w \in \mathbb{W} \land |w|<\epsilon\}$ is the set consisting of small parameters (\emph{i.e.}, parameters with an absolute value smaller then $\epsilon$) of the weight matrix. 

In Eq.~\eqref{Eqn:eS}, $S_{\epsilon}$ represents the proportion of network parameters that approach 0. We define this as $\epsilon$-sparsity of the network. Further, we prove that $\epsilon$-sparsity and sparsity (\emph{i.e.}, proportion of network parameters that equal 0) of neural networks are approximately equivalent in practice. Our theory is formulated as Theorem \ref{Thm: theorem1}. See the detailed proof of Theorem \ref{Thm: theorem1} in the Appendix.

\begin{theorem}
\label{Thm: theorem1}
The $\epsilon$-sparsity and the sparsity of neural networks are approximately equivalent.
\end{theorem}

\subsubsection{Information bottleneck.}
The information bottleneck (IB) theory proposed by Tishby \textit{et al.}~\cite{tishby2000information} is an extension of the rate distortion theory of source compression. 
This theory shows a trade-off between preserving relevant label information and obtaining efficient compression. Tishby \textit{et al.}~\cite{tishby2015deep} further research the relationship between information bottleneck theory and deep learning. They interpret the goal of deep learning as an information-theoretic trade-off between compression and prediction. 
According to the principles of information bottleneck theory, for a neural network $Y = f(X)$ with input $X$ and output $Y$, the best representation of intermediate feature map $\hat{X}$ captures the relevant features and ignores the irrelevant features (features that have little contribution to the prediction of $Y$) at the same time. This process is called "compression". One of its minimization objectives is as follows:
\begin{align}
\label{Eqn:min}
    L = I(X;\hat{X}) - \alpha I(\hat{X};Y)
\end{align}
where $I(X;\hat{X})$ is the mutual information between input $X$ and intermediate representation $\hat{X}$, $I(\hat{X};Y)$ is the mutual information between intermediate representation $\hat{X}$ and output $Y$, and $\alpha$ is a weight parameter for adjusting their proportions.

\subsubsection{Minimizing the mutual information.}
Firstly, we prove that our CKA-SR is continuous and optimizable in Theorem \ref{Thm: theorem2}, which makes it possible to minimize CKA-SR in machine learning. See the detailed proof of Theorem \ref{Thm: theorem2} in the Appendix. Then we prove that minimizing CKA-SR minimizes the mutual information $R = I(X;\hat{X})$ between the intermediate and input representation. Besides, the $\alpha I(\hat{X};Y)$ part of Eq.~\eqref{Eqn:min} is implicitly optimized through the cross entropy loss $\mathcal{L_{\mathcal{E}}}$. Thus, we prove that our method minimizes the optimization objective in Eq.~\eqref{Eqn:min}, \emph{i.e.}, our CKA-SR method conforms to the principles of information bottleneck theory, and it's beneficial to the representation compression process. Our theory is formulated as Theorem \ref{Thm: theorem3}. 

\begin{theorem}
\label{Thm: theorem2}
$\mathcal{L_{\mathcal{C}}}$ is continuous and optimizable.
\end{theorem}

\begin{theorem}
\label{Thm: theorem3}
Minimizing $\mathcal{L_{\mathcal{C}}}$ minimizes the mutual information $R = I(X;\hat{X})$ between intermediate representation $\hat{X}$ and input representation $X$.
\end{theorem}

To prove Theorem \ref{Thm: theorem3}, we first review Lemma \ref{Thm: lemma1} and Lemma \ref{Thm: lemma2} from \cite{zheng2021information} as follows. Following ~\cite{zheng2021information}, we assume that $X \sim \mathcal N(\textbf{0}, \boldsymbol \Sigma_X)$ and $Y \sim \mathcal N(\textbf{0}, \boldsymbol \Sigma_Y)$, \textit{i.e.}, feature maps $X$ and $Y$ follow Gaussian distribution.

\begin{lemma}
\label{Thm: lemma1}
Minimizing the distance between $X^TY$ and zero matrix is equivalent to minimizing the mutual information $I(X; Y)$ between representation $X$ and $Y$. 
\end{lemma}
\begin{lemma}
\label{Thm: lemma2}
Minimizing $\mathbf{CKA}_{Linear}(X, Y)$ is equivalent to minimizing $I(X; Y)$.
\end{lemma}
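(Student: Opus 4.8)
The plan is to reduce $\mathbf{CKA}_{Linear}(X,Y)$ to the quantity already controlled by Lemma~\ref{Thm: lemma1}, namely the Frobenius distance between the cross-Gram matrix $X^TY$ and the zero matrix, and then chain the two equivalences. The starting point is the closed form $\mathbf{CKA}_{Linear}(X,Y) = \|Y^TX\|_F^2 / (\|X^TX\|_F\|Y^TY\|_F)$, so the whole argument amounts to understanding how the numerator and denominator behave under the optimization that drives interlayer similarity down.

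First I would rewrite the numerator. Since $Y^TX = (X^TY)^T$ and the Frobenius norm is invariant under transposition, $\|Y^TX\|_F^2 = \|X^TY\|_F^2$, which is exactly the squared Frobenius distance between $X^TY$ and the zero matrix. Thus the numerator of $\mathbf{CKA}_{Linear}$ is precisely the objective appearing in Lemma~\ref{Thm: lemma1}. Next I would argue that the denominator $\|X^TX\|_F\|Y^TY\|_F$ acts only as a normalizing factor with respect to the cross-dependence between $X$ and $Y$: under the standing Gaussian assumption $X \sim \mathcal{N}(\mathbf{0}, \boldsymbol{\Sigma}_X)$ and $Y \sim \mathcal{N}(\mathbf{0}, \boldsymbol{\Sigma}_Y)$, the self-Gram matrices $X^TX$ and $Y^TY$ concentrate around quantities fixed by the marginal covariances $\boldsymbol{\Sigma}_X$ and $\boldsymbol{\Sigma}_Y$ alone, and neither depends on the cross-correlation encoded in $X^TY$. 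Hence the denominator is positive and, in expectation, constant along this optimization, so minimizing the ratio $\mathbf{CKA}_{Linear}(X,Y)$ is equivalent to minimizing its numerator $\|X^TY\|_F^2$.

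Combining the two steps, minimizing $\mathbf{CKA}_{Linear}(X,Y)$ is equivalent to driving $X^TY$ toward the zero matrix, and by Lemma~\ref{Thm: lemma1} this is in turn equivalent to minimizing the mutual information $I(X;Y)$, which is exactly the claim. The equivalence is thus obtained by transitivity: $\mathbf{CKA}_{Linear} \downarrow \iff \|X^TY\|_F^2 \downarrow \iff I(X;Y) \downarrow$.

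The main obstacle I expect is the rigorous justification of the decoupling in the second step. Formally $\mathbf{CKA}_{Linear}$ is a ratio, and one must ensure that the denominator does not vary in a way that reverses the monotone relationship between the ratio and its numerator. I would handle this by observing that both self-Gram matrices are positive semidefinite with Frobenius norms bounded away from zero and fixed by the marginals, so the numerator is the only term coupling $X$ and $Y$; any decrease in the ratio must therefore be realized through a decrease in $\|X^TY\|_F^2$. Making this precise is the one place where the Gaussian, fixed-covariance hypothesis does genuine work, since it is what pins the denominator to the marginal statistics and lets the cross term $X^TY$ carry the entire dependence between the two layers.
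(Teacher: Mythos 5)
Your proposal is correct and takes essentially the same route as the paper's (appendix) argument inherited from \cite{zheng2021information}: observe that $\mathbf{CKA}_{Linear}(X,Y)$ has numerator $\|Y^TX\|_F^2=\|X^TY\|_F^2$ while the denominator $\|X^TX\|_F\|Y^TY\|_F$ is pinned by the marginal Gaussian covariances $\boldsymbol{\Sigma}_X,\boldsymbol{\Sigma}_Y$ and carries no cross-dependence, so minimizing the ratio reduces to minimizing $\|X^TY\|_F^2$, and Lemma~\ref{Thm: lemma1} then converts this into minimizing $I(X;Y)$. Your explicit attention to why the denominator cannot reverse the monotone relation between the ratio and its numerator is exactly the point where the fixed-covariance Gaussian assumption does its work in the paper as well, so nothing is missing.
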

These two lemmas illustrate the relationship between the CKA similarity measurement and information theory. That is, \textit{minimizing the CKA similarity between two feature representations is equivalent to minimizing the mutual information between them}.
Based on these two lemmas, we prove Theorem \ref{Thm: theorem3}. See the detailed proof of the two lemmas and Theorem \ref{Thm: theorem3} in the Appendix.

Theorem \ref{Thm: theorem3} connects CKA-SR with information bottleneck theory. 
In short, \textit{minimizing CKA-SR is equivalent to optimizing the optimization objective \\
$I(X;\hat{X})$ of information bottleneck theory}. 

\subsubsection{Increasing the sparsity of neural networks.}
Further, starting from the information bottleneck theory, we prove that CKA-SR increases the network sparsity, formulated as Theorem \ref{Thm: theorem4}.

\begin{theorem}
\label{Thm: theorem4}
Minimizing $R = I(X; \hat{X}) \Leftrightarrow $ Minimizing
$||W||_F^2 \Leftrightarrow$ Increasing the approximate sparsity of network $\Leftrightarrow$ Increasing network sparsity.
\end{theorem}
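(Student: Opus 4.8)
The plan is to verify the three links of the stated equivalence chain in order, invoking Theorem~\ref{Thm: theorem1} to close the last one. Throughout I keep the Gaussian assumptions $X \sim \mathcal{N}(\mathbf{0}, \boldsymbol{\Sigma}_X)$ used before Lemma~\ref{Thm: lemma1}, and I model the intermediate representation as a noisy linear readout $\hat{X} = WX + \xi$ with $\xi \sim \mathcal{N}(\mathbf{0}, \sigma^2 I)$ independent of $X$; this is the standard device that keeps $I(X;\hat{X})$ finite and makes the weight matrix $W$ appear explicitly.

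For the first link, $R = I(X;\hat{X}) \Leftrightarrow ||W||_F^2$, note that under the model above $\hat{X}$ is jointly Gaussian with $X$, so the mutual information admits the closed form
\begin{align}
    I(X;\hat{X}) = \frac{1}{2}\log\det\!\left(I + \tfrac{1}{\sigma^2}\, W \boldsymbol{\Sigma}_X W^{T}\right).
\end{align}
I would then whiten the input (absorb $\boldsymbol{\Sigma}_X^{1/2}$ into $W$, or simply set $\boldsymbol{\Sigma}_X = I$ as the referenced work does) and expand $\log\det(I + A) = \mathrm{tr}(A) + o(||A||)$ in the small-weight regime relevant to sparsification, giving $I(X;\hat{X}) \approx \frac{1}{2\sigma^2}\,\mathrm{tr}(WW^{T}) = \frac{1}{2\sigma^2}\,||W||_F^2$. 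Since in this regime $I(X;\hat{X})$ is a strictly increasing function of $||W||_F^2$, minimizing one is equivalent to minimizing the other.

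For the second link, $||W||_F^2 \Leftrightarrow$ increasing the $\epsilon$-sparsity, I would apply Markov's inequality to the squared magnitudes of the $|\mathbb{W}|$ weight entries. Identifying the empirical second moment with $\frac{1}{|\mathbb{W}|}||W||_F^2$, Markov gives
\begin{align}
    1 - S_{\epsilon} = \frac{|\{w : |w| \ge \epsilon\}|}{|\mathbb{W}|} \le \frac{||W||_F^2}{\epsilon^2\,|\mathbb{W}|},
\end{align}
so $S_{\epsilon} \ge 1 - ||W||_F^2 / (\epsilon^2 |\mathbb{W}|)$. Driving $||W||_F^2$ down raises this lower bound on the $\epsilon$-sparsity, i.e.\ forces a larger fraction of weights below the threshold $\epsilon$. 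The third link is then immediate: by Theorem~\ref{Thm: theorem1} the $\epsilon$-sparsity and the exact sparsity are approximately equivalent, so increasing the former increases the latter, and chaining the three links yields the theorem.

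I expect the middle link to be the main obstacle, because $||W||_F^2 \Leftrightarrow S_\epsilon$ is not a genuine logical equivalence: one can shrink the Frobenius norm without pushing any individual weight across $\epsilon$. The honest statement is the one-directional Markov bound above, and the $\Leftrightarrow$ should be read in the optimization sense, namely that minimizing the Frobenius norm tightens the controlling bound on $1 - S_\epsilon$; a fully symmetric equivalence would require an added distributional assumption on the weights (e.g.\ a shared scale parameter whose reduction simultaneously lowers $||W||_F^2$ and raises $S_\epsilon$). The first link is comparatively routine once the noisy-channel model is fixed, though I would be explicit that the $\mathrm{tr}(A)$ approximation, and hence the clean proportionality, holds only in the small-weight regime targeted by sparsification.
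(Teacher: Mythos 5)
Your proposal is correct in the directions that actually hold, but it reaches the first two links by a genuinely different route than the paper. For the link $I(X;\hat{X}) \Leftrightarrow \|W\|_F^2$, the paper never computes the mutual information directly: it chains Theorem~\ref{Thm: theorem3} with Lemma~\ref{Thm: lemma1} to conclude that minimizing $R$ amounts to driving $X^T\hat{X}$ to the zero matrix, substitutes the deterministic layer $\hat{X} = W^TX + b$, picks an orthogonalized $X$, and invokes unitary invariance of the Frobenius norm to identify $\|X^TW^TX\|_F^2$ with $\|W\|_F^2$. Your noisy-channel device $\hat{X} = WX + \xi$ with the closed form $\tfrac{1}{2}\log\det\left(I + \sigma^{-2}W\boldsymbol{\Sigma}_X W^T\right)$ and the small-weight trace expansion buys something the paper lacks: an explicit, monotone functional dependence of $I$ on $\|W\|_F^2$, and it quietly repairs a real defect of the paper's setting, since for a deterministic affine map the mutual information $I(X;\hat{X})$ between continuous variables is infinite and only the Lemma~\ref{Thm: lemma1} surrogate (distance of $X^T\hat{X}$ to zero) makes the paper's statement meaningful. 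The cost is an added modeling assumption (the noise $\xi$, the whitening, the small-$W$ regime) found nowhere in the paper; conversely, the paper's route needs no noise but glosses over the bias term $X^Tb$ and tacitly requires $X$ to be square orthogonal for $\|X^TW^TX\|_F = \|W\|_F$ to hold. On the middle link, your Markov bound $S_{\epsilon} \ge 1 - \|W\|_F^2/(\epsilon^2|\mathbb{W}|)$ is strictly more informative than the paper's proof, which only asserts qualitatively that minimizing $\|W\|_F^2$ ``makes the values more concentrated around 0''; your caveat that this link is one-directional, and that the $\Leftrightarrow$ can only be read in the optimization sense, is accurate and applies equally to the paper's own argument, which likewise establishes no true biconditional. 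The final link via Theorem~\ref{Thm: theorem1} is identical in both proofs.
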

\begin{proof}
According to Theorem~\ref{Thm: theorem3}, CKA-SR minimizes $R = I(X;\hat{X})$ for any $X$. Further, combining this with Lemma~\ref{Thm: lemma1}, for any $X$, CKA-SR minimizes the distance between $X^T\hat{X}$ and 0 matrix. For a fully-connected layer, we have $\hat{X} = W^TX+b$. Hence, due to the discussions above, we have: for any $X$, CKA-SR minimizes the distance between $X^T(W^TX+b) = X^TW^TX+X^Tb$ and 0 matrix. We take an orthogonalized $X$. Due to the unitary invariance (\emph{i.e.}, orthogonal invariance in the real number field) of Frobenius norm, $||W||_F^2$ equals to $||X^TW^TX||_F^2$. Therefore, minimizing the distance between $X^TW^TX+X^Tb$ and 0 matrix is equivalent to minimizing $||X^TW^TX||_F^2$ and further equivalent to minimizing $||W||_F^2$. 

The above minimization of $||W||_F^2$ minimizes the norm of parameter values in weight matrix $W$, thus making the values more concentrated around 0 value. This increases the network's approximate sparsity (defined earlier in this article). Further, according to Theorem~\ref{Thm: theorem1}, the approximate sparsity and sparsity are approximately equivalent. So we prove that the above minimization of $||W||_F^2$ increases the network sparsity.
\end{proof}
Theorem~\ref{Thm: theorem4} connects the optimization objective of information bottleneck theory with network sparsity, thus connecting CKA-SR with network sparsity. 
In short, \textit{CKA-SR models are more sparse.} We validate this conclusion with our experimental results. Fig.\ref{fig:distribution} compares parameter distribution between CKA-SR and baseline models. It's evident that the absolute value of CKA-SR network parameters is more concentrated around 0.

\section{Experiments}
\label{sec: Experiments}

\subsection{Implementations}
\subsubsection{Datasets and backbone models.}
We validate the effectiveness of our CKA-SR method on image classification, network pruning, and advanced sparse training. We use ResNet18, ResNet20, ResNet32 and ResNet50~\cite{he2016deep} as backbones to conduct extensive experiments on CIFAR-10, CIFAR-100 and ImageNet datasets.

\subsubsection{Implementations.}
We implement our CKA-SR as a regularization of the loss function. We develop a plug-and-play CKA-SR class in PyTorch and plug it into various pre-training and sparse training codes. 
Because CKA-SR is a regularization of layerwise parameters instead of feature maps themselves, we could utilize few-shot samples of each batch (\textit{generally 8 samples when the batch size is 128 or 256}) to compute CKA-SR. This reduces the computational complexity, thus reducing training expenses.
Precisely, we strictly follow the experimental settings of the pruning~\cite{aflalo2020knapsack, han2015learning, li2016pruning} and sparse training methods~\cite{bai2022dual, frankle2018lottery, liu2021unreasonable, wang2020picking} and make fair comparisons with them using CKA-SR. The total number of epochs, batch size, optimizer, weight decay, and learning rates all stay the same with the methods to be compared with. 


\subsection{Pre-Training with CKA-SR}
As previously proved, our CKA-SR increases network sparsity. So we validate the performance of CKA-SR in network pruning tasks. We directly prune models pre-trained with CKA-SR on large-scale datasets such as ImageNet. We carry out experiments on several pruning methods and find that our method is effective. As shown in Figure~\ref{fig:performances}, at the same pruning ratio, CKA-SR models outperform baseline models. 

\begin{figure*}[!tb]
	\centering
	\subfigure[L1-norm filter pruning]{
		\begin{minipage}[t]{0.32\linewidth}
			\centering
			\includegraphics[width=\linewidth]{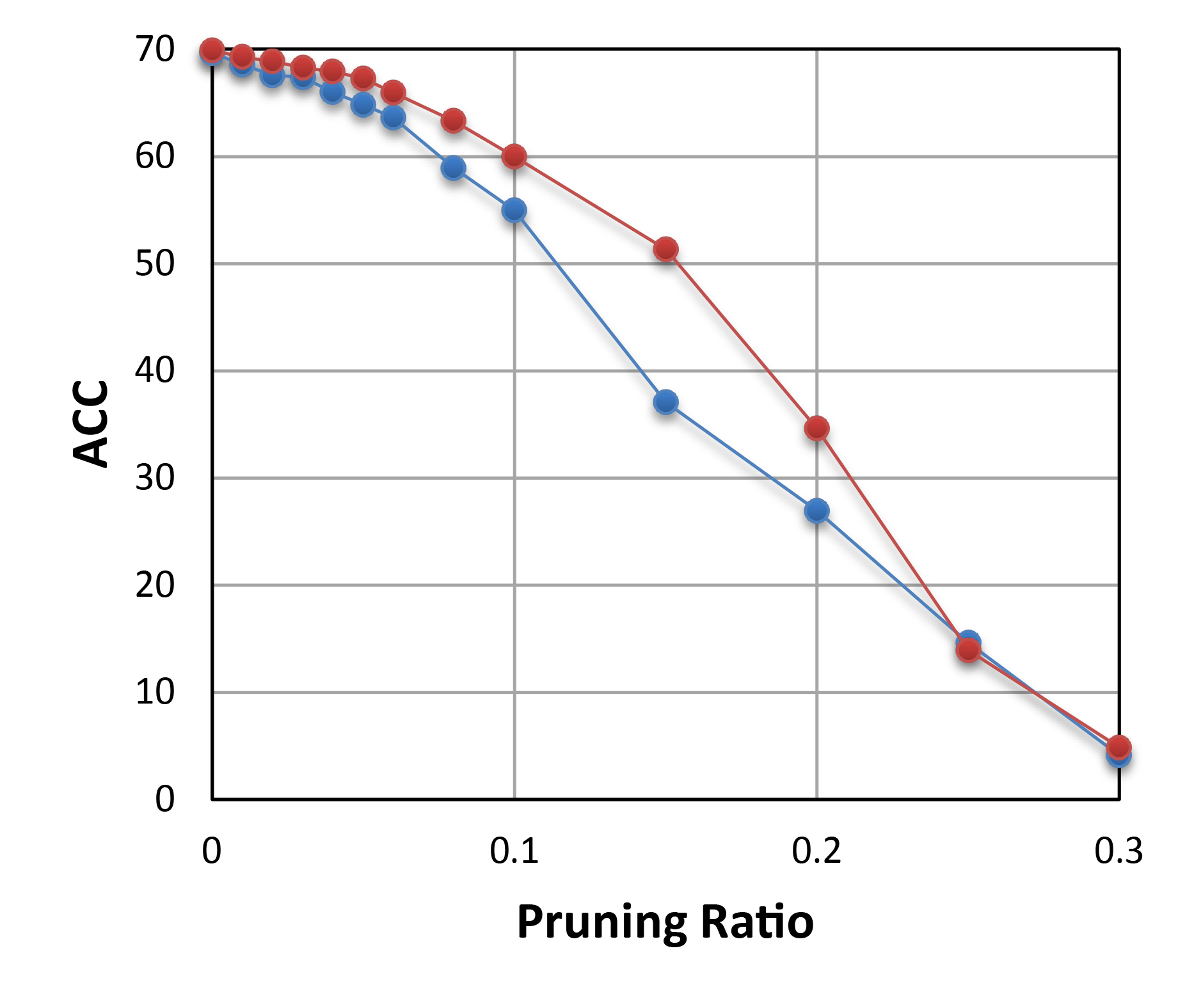}
			\label{fig:l1}
		\end{minipage}%
	}%
	\subfigure[Knapsack pruning]{
		\begin{minipage}[t]{0.32\linewidth}
			\centering
			\includegraphics[width=\linewidth]{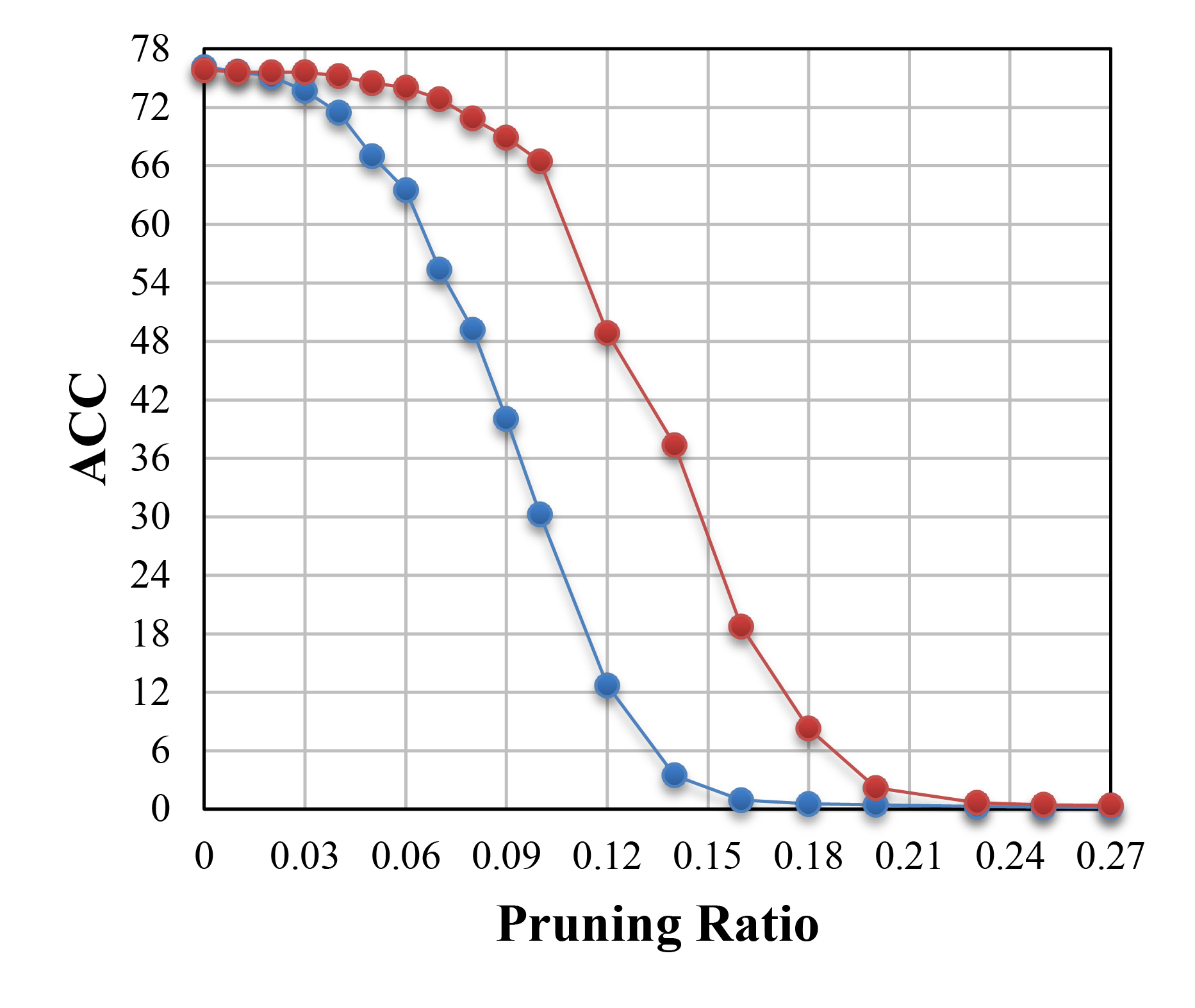}
			\label{fig:knapsack}
		\end{minipage}%
	}%
	\subfigure[Weight-level pruning]{
		\begin{minipage}[t]{0.32\linewidth}
			\centering
			\includegraphics[width=\linewidth]{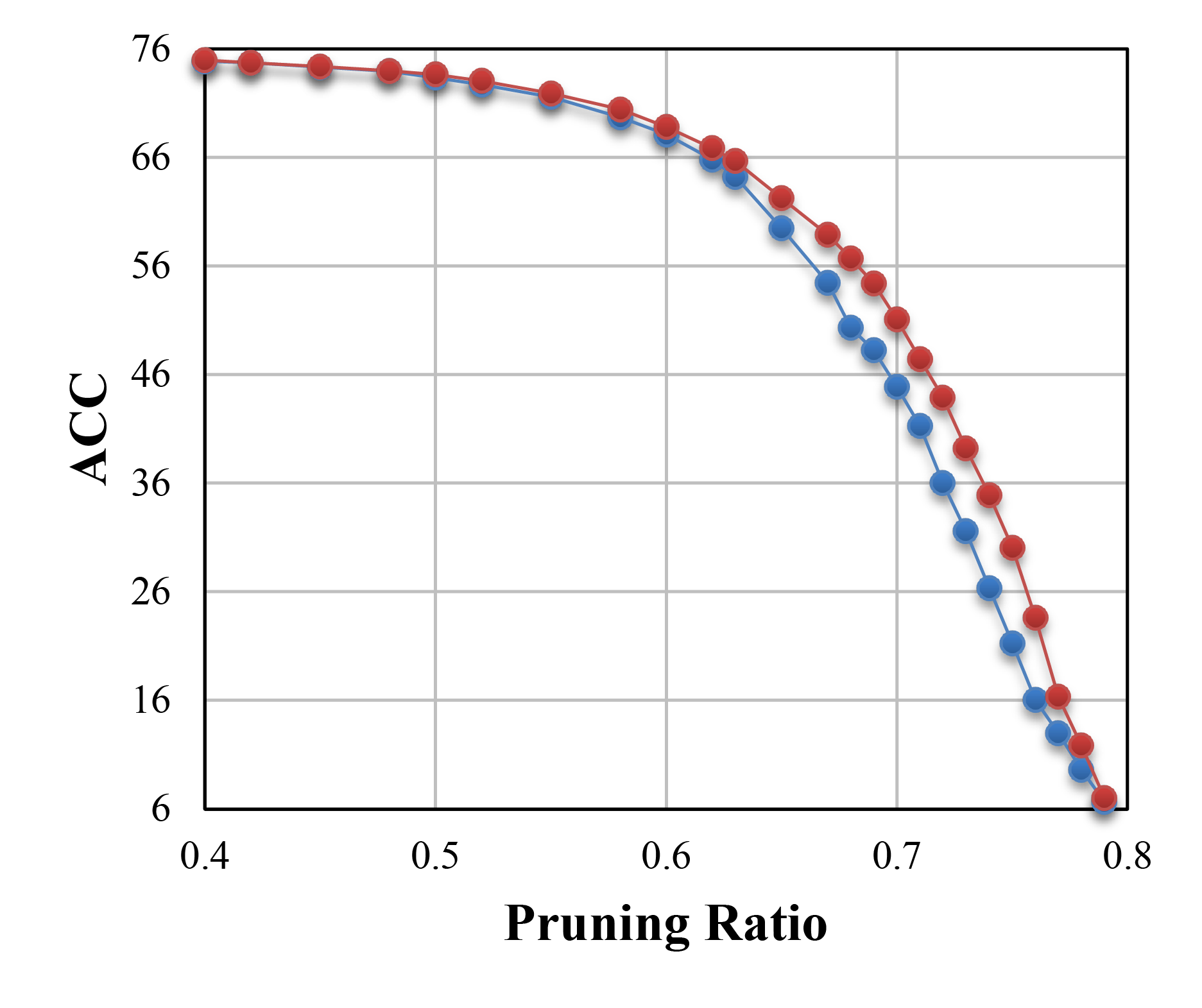}
			\label{fig:weight}
		\end{minipage}%
	}%
	\caption{Performances of several pruning methods with CKA-SR. The red lines represent CKA-SR models and the blue lines represent baseline models. (a) Performances of L1-norm filter pruning with ResNet18 on ImageNet.  (b) Performances of knapsack channel pruning with ResNet50 on ImageNet. (c) Performances of non-structured weight-level pruning with ResNet50 on ImageNet.}
	\label{fig:performances}
	\vspace{-10pt}
\end{figure*}

\subsubsection{Structured pruning.}
Following the setting of ~\cite{li2016pruning}, we perform filter pruning on models pre-trained with CKA-SR without finetuning. Specifically, we prune the filter according to the L1-Norm. The relationship between the pruning ratio and performance is shown in Figure \ref{fig:l1}. When a few filters are pruned, the performance reduction of CKA-SR models is significantly smaller than that of baseline models. 


As a State-Of-The-Art method for channel pruning, we perform Knapsack channel pruning~\cite{aflalo2020knapsack} on models pre-trained with CKA-SR and achieve higher classification accuracy. The results of Knapsack pruning (w/o finetuning) are shown in Figure \ref{fig:knapsack}. When a few channels are pruned, the performance reduction of CKA-SR models is much smaller than that of baseline models, which means CKA-SR models possess much higher sparsity.


\subsubsection{Non-structured pruning.}
We perform non-structured weight-level pruning \cite{han2015learning} according to the absolute values of individual weights and compare the performance between baseline ResNet models and pre-trained ResNets with CKA-SR. The relationship between pruning ratio and performance is shown in Figure \ref{fig:weight}. It could be concluded that when massive weights are pruned, the performance reduction of CKA-SR models is smaller than that of baseline models.


Generally, pre-trained models with CKA-SR outperform baseline models in both structured and non-structured pruning methods.

\subsection{Sparse network training with CKA-SR}
We conduct extensive experiments on several State-Of-The-Art sparse training methods. 
For fair comparisons, our experiments follow the same settings and backbones of these methods~\cite{bai2022dual, frankle2018lottery, liu2021unreasonable, wang2020picking}. 
Note that we conduct experiments on extremely high sparsity (such as 99.8\%) settings in GraSP~\cite{wang2020picking}, Random sparse training~\cite{liu2021unreasonable}, and DLTH~\cite{bai2022dual}. From Table \ref{tab:sparse}, we can find that CKA-SR consistently improves the performance at different levels of sparsity ratios in LTH~\cite{frankle2018lottery}, GraSP~\cite{wang2020picking}, Random sparse training~\cite{liu2021unreasonable}, and DLTH~\cite{bai2022dual}.

\subsubsection{LTH.}
Lottery Ticket Hypothesis (LTH) \cite{frankle2018lottery} is proposed to train a sparse network from scratch, which states that any randomly initialized dense network contains sub-networks achieving similar accuracy to the original network. We plug our CKA-SR into the training process of LTH. We use the code implemented for LTH by~\cite{su2020sanity}, adopt ResNet32 as the backbone, and apply sparsity ratios from 0.70 to 0.98 for fair comparisons. The results are given in the first row of Table~\ref{tab:sparse}. 


\subsubsection{GraSP.}
Gradient Signal Preservation (GraSP) \cite{wang2020picking} proposes to preserve the gradient flow through the network during sparse training. We plug our CKA-SR into the sparse training process of GraSP, adopt ResNet32 as the backbone, and apply sparsity ratios from 0.70 to 0.998. The results are given in the second row of Table~\ref{tab:sparse}.


\subsubsection{Random sparse training.} 
As one of the newest and State-Of-The-Art sparse training methods, it has been proven that sparse training of randomly initialized networks can also achieve remarkable performances~\cite{liu2021unreasonable}. We plug our CKA-SR into the random sparse training process, adopt ResNet20 as the backbone, and apply sparsity ratios from 0.70 to 0.998. The results are given in the third row of Table~\ref{tab:sparse}.

\begin{table}[!tb]
  \small
  \caption{The accuracy (\%) when plugging CKA-SR to different sparse training methods on CIFAR-100 from scratch. (LTH is broken when sparsity ratio is larger than 0.99 due to destruction of the structure.)}
  \label{tab:sparse}
  \centering
  \begin{adjustbox}{max width=\linewidth}
  \begin{tabular}{|l|l|l|l|l|l|l|l|}
    \hline
     \cellcolor{white}\multirow{2}{*}{Backbone} & \cellcolor{white}\multirow{2}{*}{Method} & \multicolumn{6}{c|}{Sparsity} \\
    \cline{3-8}
       &  & \textit{0.70} & \textit{0.85} & \textit{0.90} & \textit{0.95} & \textit{0.98} & \textit{0.998}\\
     \hline
    \multirow{2}{*}{ResNet32} & LTH\cite{frankle2018lottery} & 72.28 & 70.64 & 69.63 & 66.48 & 60.22 & \ding{56} \\
     & +CKA-SR & \textbf{72.67} & \textbf{71.90} & \textbf{70.11} & \textbf{67.07} & \textbf{60.36} & \ding{56}    \\
     \hline
     \multirow{2}{*}{ResNet32} & GraSP\cite{wang2020picking} & 71.98 & 70.22 & 69.19 & 65.82 & 59.46 &12.19 \\
     & +CKA-SR & \textbf{72.19} & \textbf{70.25} & \textbf{69.28} & \textbf{66.29} & \textbf{59.49} & \textbf{18.44}    \\
    \hline
    \multirow{2}{*}{ResNet20} & Random\cite{liu2021unreasonable} & 65.42 & 60.37 & 56.96 & 47.27 & 33.74 & 2.95 \\
     & +CKA-SR & \textbf{65.60} & \textbf{60.86} & \textbf{57.25} & \textbf{48.26} & \textbf{34.44} &\textbf{3.32}  \\
    \hline
    \multirow{2}{*}{ResNet20} & DLTH\cite{bai2022dual} & 67.63 & 65.33 & 62.90 & 57.33 & 48.08 & 19.32\\
     & +CKA-SR & \textbf{67.95} & \textbf{65.80} & \textbf{63.19} & \textbf{57.99} & \textbf{49.26} & \textbf{20.81}\\
    \hline
  \end{tabular}
  \end{adjustbox}
  \vspace{-10pt}
\end{table}


\subsubsection{DLTH.}
 As one of the newest and State-Of-The-Art LTH-based sparse training methods, Dual Lottery Ticket Hypothesis (DLTH)~\cite{bai2022dual} proposes to randomly select subnetworks from a randomly initialized dense network, which can be transformed into a trainable condition and achieve good performance. We apply our CKA-SR to the training process of the DLTH method, adopt ResNet20 as the backbone, and apply sparsity ratios from 0.70 to 0.998. The results are given in the final row of Table \ref{tab:sparse}.


As shown in Table \ref{tab:sparse}, our CKA-SR can be plugged into multiple sparse training methods and improves the model performance consistently. 
The CKA-SR is effective consistently at different sparse networks, especially at extremely high sparsity. 
For GraSP, CKA-SR achieves more than 4.0\% of performance improvement at sparsity 99.5\% and 6.0\% at sparsity 99.8\%.

\subsection{Ablation studies}
\subsubsection{Ablation study of regularization term.} 
Savarese \textit{et al.}~\cite{savarese2020winning} develop a \\ regularization-based sparse network searching method named Continuous Sparsification. This method introduces $L_0$ Regularization into sparse training. We compare our CKA-SR with $L_0$ Regularization theoretically and experimentally. Theoretically, CKA-SR and $L_0$ regularization regularize networks from different granularity levels. $L_0$ regularization regularizes networks from the individual parameter level, while CKA-SR regularizes networks from the layer level. These regularizations from different granularity levels could work together. Experimentally, we conduct sparse training experiments with ResNet18 on CIFAR-10 using the official code of the CS method. We find that our CKA-SR is able to replace $L_0$ regularization and achieves better performance. Besides, combining CKA-SR and $L_0$ improves performance by 0.4\%, demonstrating that our CKA-SR could cooperate with other regularizations. The results are shown in Table \ref{sample-table10}. \\

\begin{table}[H]
 \vspace{-10pt}
  \small
  \caption{Ablation study of regularization terms}
  \label{sample-table10}
  \centering
  \begin{adjustbox}{max width=\linewidth}
  \begin{tabular}{|l |c |c |c |}
    \hline
     Settings & CKA-SR Only & $L_0$ Only & CKA-SR+$L_0$\\
    \hline
     Top1-Acc & 91.63 & 91.56 & 91.92\\
    \hline
  \end{tabular}
  \end{adjustbox}
  \vspace{-10pt}
\end{table}

\subsubsection{Ablation study of hyperparameter $\beta$.} 
We conduct the ablation study of hyperparameter $\beta$ with Random Sparse Training~\cite{liu2021unreasonable} method on CIFAR-10 dataset. Taking ResNet20 model at a sparsity of 0.95 and adjusting the weight hyperparameter $\beta$ of our CKA-SR, we get the results shown in Table \ref{sample-table9}.  \\

\begin{table}[H]
 \vspace{-10pt}
  \small
  \caption{Ablation study of hyperparameter $\beta$}
  \label{sample-table9}
  \centering
  \begin{adjustbox}{max width=\linewidth}
  \begin{tabular}{|l |l |l |l |l |l |l |l |l |l |}
    \hline
     $\beta$ & 0 & 1e-05 & 5e-05 & 2e-04 & 8e-04 & 1e-03 & 2e-03 & 5e-03\\
    \hline
     Top1-Acc & 84.16 & 84.69 & 84.42 & 84.40 & \textbf{85.03} & 84.82 & 84.08 & 83.86\\
    \hline
  \end{tabular}
  \end{adjustbox}
  \vspace{-10pt}
\end{table}

We conclude that multiple values of hyperparameter $\beta$ between 1e-05 and 1e-03 increase the performance of sparse networks. However, when the hyperparameter $\beta$ becomes too large, it would weaken the succession of information through layers, thus causing a reduction in performance. That is to say, there is a trade-off between the identity of layers and the succession of information through layers. In the view of sparsity, there is a trade-off between high sparsity and ideal performance.
\section{Conclusion}
\label{sec: conc}

Our work reveals the relationship between overparameterization, network sparsity, and interlayer feature similarity. We thus propose to use the robust and advanced CKA similarity measurement to solve the overparameterization issue. Specifically, we propose a plug-and-play sparsity regularization named CKA-SR which explicitly reduces interlayer similarity. Theoretically, we reveal the equivalence of reducing interlayer similarity and increasing network sparsity, thus proving the CKA-SR increases network sparsity. Experimentally, our CKA-SR consistently improves the performances of several State-Of-The-Art sparse training methods and several pruning methods. Besides, our CKA-SR outperforms former regularization methods. 
In the future, considering our limitations of expenses to manually select hyperparameters and calculate loss, we will continue to investigate the cooperation of multiple regularizations in sparse training and reduce the expenses of sparse training. 



%
%
%
%




\renewcommand\bibname{References}
\makeatletter
\renewenvironment{thebibliography}[1]
  {\section*{\bibname}%
   \@mkboth{\MakeUppercase\bibname}{\MakeUppercase\bibname}%
   \list{\@biblabel{\@arabic\c@enumiv}}%
        {\settowidth\labelwidth{\@biblabel{#1}}%
         \leftmargin\labelwidth
         \advance\leftmargin\labelsep
         \@openbib@code
         \usecounter{enumiv}%
         \let\p@enumiv\@empty
         \renewcommand\theenumiv{\@arabic\c@enumiv}}%
   \sloppy
   \clubpenalty4000
   \@clubpenalty \clubpenalty
   \widowpenalty4000%
   \sfcode`\.\@m}
  {\def\@noitemerr
    {\@latex@warning{Empty `thebibliography' environment}}%
   \endlist}
\makeatother
\bibliography{references}

\begin{thebibliography}{10}
\providecommand{\url}[1]{#1}
\csname url@samestyle\endcsname
\providecommand{\newblock}{\relax}
\providecommand{\bibinfo}[2]{#2}
\providecommand{\BIBentrySTDinterwordspacing}{\spaceskip=0pt\relax}
\providecommand{\BIBentryALTinterwordstretchfactor}{4}
\providecommand{\BIBentryALTinterwordspacing}{\spaceskip=\fontdimen2\font plus
\BIBentryALTinterwordstretchfactor\fontdimen3\font minus
  \fontdimen4\font\relax}
\providecommand{\BIBforeignlanguage}[2]{{%
\expandafter\ifx\csname l@#1\endcsname\relax
\typeout{** WARNING: IEEEtran.bst: No hyphenation pattern has been}%
\typeout{** loaded for the language `#1'. Using the pattern for}%
\typeout{** the default language instead.}%
\else
\language=\csname l@#1\endcsname
\fi
#2}}
\providecommand{\BIBdecl}{\relax}
\BIBdecl

\bibitem{liu2021unreasonable}
S.~Liu, T.~Chen, X.~Chen, L.~Shen, D.~C. Mocanu, Z.~Wang, and M.~Pechenizkiy,
  ``The unreasonable effectiveness of random pruning: Return of the most naive
  baseline for sparse training,'' in \emph{International Conference on Learning
  Representations}, 2021.

\bibitem{aflalo2020knapsack}
Y.~Aflalo, A.~Noy, M.~Lin, I.~Friedman, and L.~Zelnik, ``Knapsack pruning with
  inner distillation,'' \emph{arXiv preprint arXiv:2002.08258}, 2020.

\bibitem{li2016pruning}
H.~Li, A.~Kadav, I.~Durdanovic, H.~Samet, and H.~P. Graf, ``Pruning filters for
  efficient convnets,'' \emph{arXiv preprint arXiv:1608.08710}, 2016.

\bibitem{tishby2000information}
N.~Tishby, F.~C. Pereira, and W.~Bialek, ``The information bottleneck method,''
  \emph{arXiv preprint physics/0004057}, 2000.

\bibitem{dai2018compressing}
B.~Dai, C.~Zhu, B.~Guo, and D.~Wipf, ``Compressing neural networks using the
  variational information bottleneck,'' in \emph{International Conference on
  Machine Learning}.\hskip 1em plus 0.5em minus 0.4em\relax PMLR, 2018, pp.
  1135--1144.

\bibitem{tishby2015deep}
N.~Tishby and N.~Zaslavsky, ``Deep learning and the information bottleneck
  principle,'' in \emph{2015 ieee information theory workshop (itw)}.\hskip 1em
  plus 0.5em minus 0.4em\relax IEEE, 2015, pp. 1--5.

\bibitem{saxe2019information}
A.~M. Saxe, Y.~Bansal, J.~Dapello, M.~Advani, A.~Kolchinsky, B.~D. Tracey, and
  D.~D. Cox, ``On the information bottleneck theory of deep learning,''
  \emph{Journal of Statistical Mechanics: Theory and Experiment}, vol. 2019,
  no.~12, p. 124020, 2019.

\bibitem{shwartz2017opening}
R.~Shwartz-Ziv and N.~Tishby, ``Opening the black box of deep neural networks
  via information,'' \emph{arXiv preprint arXiv:1703.00810}, 2017.

\bibitem{kornblith2019similarity}
S.~Kornblith, M.~Norouzi, H.~Lee, and G.~Hinton, ``Similarity of neural network
  representations revisited,'' in \emph{International Conference on Machine
  Learning}.\hskip 1em plus 0.5em minus 0.4em\relax PMLR, 2019, pp. 3519--3529.

\bibitem{laakso2000content}
A.~Laakso and G.~Cottrell, ``Content and cluster analysis: assessing
  representational similarity in neural systems,'' \emph{Philosophical
  psychology}, vol.~13, no.~1, pp. 47--76, 2000.

\bibitem{lecun1990second}
Y.~LeCun, I.~Kanter, and S.~Solla, ``Second order properties of error surfaces:
  Learning time and generalization,'' \emph{Advances in neural information
  processing systems}, vol.~3, 1990.

\bibitem{morcos2018insights}
A.~Morcos, M.~Raghu, and S.~Bengio, ``Insights on representational similarity
  in neural networks with canonical correlation,'' \emph{Advances in Neural
  Information Processing Systems}, vol.~31, 2018.

\bibitem{raghu2017svcca}
M.~Raghu, J.~Gilmer, J.~Yosinski, and J.~Sohl-Dickstein, ``Svcca: Singular
  vector canonical correlation analysis for deep learning dynamics and
  interpretability,'' \emph{Advances in neural information processing systems},
  vol.~30, 2017.

\bibitem{allen2019convergence}
Z.~Allen-Zhu, Y.~Li, and Z.~Song, ``A convergence theory for deep learning via
  over-parameterization,'' in \emph{International Conference on Machine
  Learning}.\hskip 1em plus 0.5em minus 0.4em\relax PMLR, 2019, pp. 242--252.

\bibitem{nguyen2020wide}
T.~Nguyen, M.~Raghu, and S.~Kornblith, ``Do wide and deep networks learn the
  same things? uncovering how neural network representations vary with width
  and depth,'' \emph{arXiv preprint arXiv:2010.15327}, 2020.

\bibitem{wang2018towards}
L.~Wang, L.~Hu, J.~Gu, Z.~Hu, Y.~Wu, K.~He, and J.~Hopcroft, ``Towards
  understanding learning representations: To what extent do different neural
  networks learn the same representation,'' \emph{Advances in neural
  information processing systems}, vol.~31, 2018.

\bibitem{raghu2021vision}
M.~Raghu, T.~Unterthiner, S.~Kornblith, C.~Zhang, and A.~Dosovitskiy, ``Do
  vision transformers see like convolutional neural networks?'' \emph{Advances
  in Neural Information Processing Systems}, vol.~34, 2021.

\bibitem{ramsay1984matrix}
J.~Ramsay, J.~ten Berge, and G.~Styan, ``Matrix correlation,''
  \emph{Psychometrika}, vol.~49, no.~3, pp. 403--423, 1984.

\bibitem{gretton2005measuring}
A.~Gretton, O.~Bousquet, A.~Smola, and B.~Sch{\"o}lkopf, ``Measuring
  statistical dependence with hilbert-schmidt norms,'' in \emph{International
  conference on algorithmic learning theory}.\hskip 1em plus 0.5em minus
  0.4em\relax Springer, 2005, pp. 63--77.

\bibitem{he2016deep}
K.~He, X.~Zhang, S.~Ren, and J.~Sun, ``Deep residual learning for image
  recognition,'' in \emph{Proceedings of the IEEE conference on computer vision
  and pattern recognition}, 2016, pp. 770--778.

\bibitem{liu2021swin}
Z.~Liu, Y.~Lin, Y.~Cao, H.~Hu, Y.~Wei, Z.~Zhang, S.~Lin, and B.~Guo, ``Swin
  transformer: Hierarchical vision transformer using shifted windows,'' in
  \emph{Proceedings of the IEEE/CVF International Conference on Computer
  Vision}, 2021, pp. 10\,012--10\,022.

\bibitem{dosovitskiy2020image}
A.~Dosovitskiy, L.~Beyer, A.~Kolesnikov, D.~Weissenborn, X.~Zhai,
  T.~Unterthiner, M.~Dehghani, M.~Minderer, G.~Heigold, S.~Gelly \emph{et~al.},
  ``An image is worth 16x16 words: Transformers for image recognition at
  scale,'' \emph{arXiv preprint arXiv:2010.11929}, 2020.

\bibitem{yuan2021tokens}
L.~Yuan, Y.~Chen, T.~Wang, W.~Yu, Y.~Shi, Z.-H. Jiang, F.~E. Tay, J.~Feng, and
  S.~Yan, ``Tokens-to-token vit: Training vision transformers from scratch on
  imagenet,'' in \emph{Proceedings of the IEEE/CVF International Conference on
  Computer Vision}, 2021, pp. 558--567.

\bibitem{touvron2021training}
H.~Touvron, M.~Cord, M.~Douze, F.~Massa, A.~Sablayrolles, and H.~J{\'e}gou,
  ``Training data-efficient image transformers \& distillation through
  attention,'' in \emph{International Conference on Machine Learning}.\hskip
  1em plus 0.5em minus 0.4em\relax PMLR, 2021, pp. 10\,347--10\,357.

\bibitem{zheng2021information}
X.~Zheng, Y.~Ma, T.~Xi, G.~Zhang, E.~Ding, Y.~Li, J.~Chen, Y.~Tian, and R.~Ji,
  ``An information theory-inspired strategy for automatic network pruning,''
  \emph{arXiv preprint arXiv:2108.08532}, 2021.

\bibitem{han2015deep}
S.~Han, H.~Mao, and W.~J. Dally, ``Deep compression: Compressing deep neural
  networks with pruning, trained quantization and huffman coding,'' \emph{arXiv
  preprint arXiv:1510.00149}, 2015.

\bibitem{hu2016network}
H.~Hu, R.~Peng, Y.-W. Tai, and C.-K. Tang, ``Network trimming: A data-driven
  neuron pruning approach towards efficient deep architectures,'' \emph{arXiv
  preprint arXiv:1607.03250}, 2016.

\bibitem{molchanov2016pruning}
P.~Molchanov, S.~Tyree, T.~Karras, T.~Aila, and J.~Kautz, ``Pruning
  convolutional neural networks for resource efficient inference,'' \emph{arXiv
  preprint arXiv:1611.06440}, 2016.

\bibitem{he2019filter}
Y.~He, P.~Liu, Z.~Wang, Z.~Hu, and Y.~Yang, ``Filter pruning via geometric
  median for deep convolutional neural networks acceleration,'' in
  \emph{Proceedings of the IEEE/CVF Conference on Computer Vision and Pattern
  Recognition}, 2019, pp. 4340--4349.

\bibitem{luo2017thinet}
J.-H. Luo, J.~Wu, and W.~Lin, ``Thinet: A filter level pruning method for deep
  neural network compression,'' in \emph{Proceedings of the IEEE international
  conference on computer vision}, 2017, pp. 5058--5066.

\bibitem{liu2017learning}
Z.~Liu, J.~Li, Z.~Shen, G.~Huang, S.~Yan, and C.~Zhang, ``Learning efficient
  convolutional networks through network slimming,'' in \emph{Proceedings of
  the IEEE international conference on computer vision}, 2017, pp. 2736--2744.

\bibitem{he2017channel}
Y.~He, X.~Zhang, and J.~Sun, ``Channel pruning for accelerating very deep
  neural networks,'' in \emph{Proceedings of the IEEE international conference
  on computer vision}, 2017, pp. 1389--1397.

\bibitem{frankle2018lottery}
J.~Frankle and M.~Carbin, ``The lottery ticket hypothesis: Finding sparse,
  trainable neural networks,'' \emph{arXiv preprint arXiv:1803.03635}, 2018.

\bibitem{savarese2020winning}
P.~Savarese, H.~Silva, and M.~Maire, ``Winning the lottery with continuous
  sparsification,'' \emph{Advances in Neural Information Processing Systems},
  vol.~33, pp. 11\,380--11\,390, 2020.

\bibitem{louizos2017learning}
C.~Louizos, M.~Welling, and D.~P. Kingma, ``Learning sparse neural networks
  through $ l\_0 $ regularization,'' \emph{arXiv preprint arXiv:1712.01312},
  2017.

\bibitem{kusupati2020soft}
A.~Kusupati, V.~Ramanujan, R.~Somani, M.~Wortsman, P.~Jain, S.~Kakade, and
  A.~Farhadi, ``Soft threshold weight reparameterization for learnable
  sparsity,'' in \emph{International Conference on Machine Learning}.\hskip 1em
  plus 0.5em minus 0.4em\relax PMLR, 2020, pp. 5544--5555.

\bibitem{wortsman2019discovering}
M.~Wortsman, A.~Farhadi, and M.~Rastegari, ``Discovering neural wirings,''
  \emph{Advances in Neural Information Processing Systems}, vol.~32, 2019.

\bibitem{bai2022dual}
Y.~Bai, H.~Wang, Z.~Tao, K.~Li, and Y.~Fu, ``Dual lottery ticket hypothesis,''
  \emph{arXiv preprint arXiv:2203.04248}, 2022.

\bibitem{su2020sanity}
J.~Su, Y.~Chen, T.~Cai, T.~Wu, R.~Gao, L.~Wang, and J.~D. Lee,
  ``Sanity-checking pruning methods: Random tickets can win the jackpot,''
  \emph{Advances in Neural Information Processing Systems}, vol.~33, pp.
  20\,390--20\,401, 2020.

\bibitem{wang2020picking}
C.~Wang, G.~Zhang, and R.~Grosse, ``Picking winning tickets before training by
  preserving gradient flow,'' \emph{arXiv preprint arXiv:2002.07376}, 2020.

\bibitem{hotelling1992relations}
H.~Hotelling, ``Relations between two sets of variates,'' in
  \emph{Breakthroughs in statistics}.\hskip 1em plus 0.5em minus 0.4em\relax
  Springer, 1992, pp. 162--190.

\bibitem{hardoon2004canonical}
D.~R. Hardoon, S.~Szedmak, and J.~Shawe-Taylor, ``Canonical correlation
  analysis: An overview with application to learning methods,'' \emph{Neural
  computation}, vol.~16, no.~12, pp. 2639--2664, 2004.

\bibitem{cortes2012algorithms}
C.~Cortes, M.~Mohri, and A.~Rostamizadeh, ``Algorithms for learning kernels
  based on centered alignment,'' \emph{The Journal of Machine Learning
  Research}, vol.~13, pp. 795--828, 2012.

\bibitem{zheng2022rminas}
X.~Zheng, X.~Fei, L.~Zhang, C.~Wu, F.~Chao, J.~Liu, W.~Zeng, Y.~Tian, and
  R.~Ji, ``Neural architecture search with representation mutual information,''
  \emph{Proceedings of the IEEE/CVF Conference on Computer Vision and Pattern
  Recognition}, 2022.

\bibitem{zheng2021migo}
X.~Zheng, R.~Ji, Y.~Chen, Q.~Wang, B.~Zhang, J.~Chen, Q.~Ye, F.~Huang, and
  Y.~Tian, ``Migo-nas: Towards fast and generalizable neural architecture
  search,'' \emph{IEEE Transactions on Pattern Analysis and Machine
  Intelligence}, 2021.

\bibitem{howard2019searching}
A.~Howard, M.~Sandler, G.~Chu, L.-C. Chen, B.~Chen, M.~Tan, W.~Wang, Y.~Zhu,
  R.~Pang, V.~Vasudevan \emph{et~al.}, ``Searching for mobilenetv3,'' in
  \emph{Proceedings of the IEEE/CVF International Conference on Computer
  Vision}, 2019, pp. 1314--1324.

\bibitem{howard2017mobilenets}
A.~G. Howard, M.~Zhu, B.~Chen, D.~Kalenichenko, W.~Wang, T.~Weyand,
  M.~Andreetto, and H.~Adam, ``Mobilenets: Efficient convolutional neural
  networks for mobile vision applications,'' \emph{arXiv preprint
  arXiv:1704.04861}, 2017.

\bibitem{sandler2018mobilenetv2}
M.~Sandler, A.~Howard, M.~Zhu, A.~Zhmoginov, and L.-C. Chen, ``Mobilenetv2:
  Inverted residuals and linear bottlenecks,'' in \emph{Proceedings of the IEEE
  conference on computer vision and pattern recognition}, 2018, pp. 4510--4520.

\bibitem{tan2019efficientnet}
M.~Tan and Q.~Le, ``Efficientnet: Rethinking model scaling for convolutional
  neural networks,'' in \emph{International conference on machine
  learning}.\hskip 1em plus 0.5em minus 0.4em\relax PMLR, 2019, pp. 6105--6114.

\bibitem{liu2022convnet}
Z.~Liu, H.~Mao, C.-Y. Wu, C.~Feichtenhofer, T.~Darrell, and S.~Xie, ``A convnet
  for the 2020s,'' \emph{arXiv preprint arXiv:2201.03545}, 2022.

\bibitem{devlin2018bert}
J.~Devlin, M.-W. Chang, K.~Lee, and K.~Toutanova, ``Bert: Pre-training of deep
  bidirectional transformers for language understanding,'' \emph{arXiv preprint
  arXiv:1810.04805}, 2018.

\bibitem{vaswani2017attention}
A.~Vaswani, N.~Shazeer, N.~Parmar, J.~Uszkoreit, L.~Jones, A.~N. Gomez,
  {\L}.~Kaiser, and I.~Polosukhin, ``Attention is all you need,''
  \emph{Advances in neural information processing systems}, vol.~30, 2017.

\bibitem{liu2019roberta}
Y.~Liu, M.~Ott, N.~Goyal, J.~Du, M.~Joshi, D.~Chen, O.~Levy, M.~Lewis,
  L.~Zettlemoyer, and V.~Stoyanov, ``Roberta: A robustly optimized bert
  pretraining approach,'' \emph{arXiv preprint arXiv:1907.11692}, 2019.

\bibitem{deng2009imagenet}
J.~Deng, W.~Dong, R.~Socher, L.-J. Li, K.~Li, and L.~Fei-Fei, ``Imagenet: A
  large-scale hierarchical image database,'' in \emph{2009 IEEE conference on
  computer vision and pattern recognition}.\hskip 1em plus 0.5em minus
  0.4em\relax Ieee, 2009, pp. 248--255.

\bibitem{krizhevsky2009learning}
A.~Krizhevsky, G.~Hinton \emph{et~al.}, ``Learning multiple layers of features
  from tiny images,'' 2009.

\bibitem{han2015learning}
S.~Han, J.~Pool, J.~Tran, and W.~Dally, ``Learning both weights and connections
  for efficient neural network,'' \emph{Advances in neural information
  processing systems}, vol.~28, 2015.

\bibitem{rao2021dynamicvit}
Y.~Rao, W.~Zhao, B.~Liu, J.~Lu, J.~Zhou, and C.-J. Hsieh, ``Dynamicvit:
  Efficient vision transformers with dynamic token sparsification,''
  \emph{Advances in neural information processing systems}, vol.~34, pp.
  13\,937--13\,949, 2021.

\bibitem{NEURIPS2019_9015}
\BIBentryALTinterwordspacing
A.~Paszke, S.~Gross, F.~Massa, A.~Lerer, J.~Bradbury, G.~Chanan, T.~Killeen,
  Z.~Lin, N.~Gimelshein, L.~Antiga, A.~Desmaison, A.~Kopf, E.~Yang, Z.~DeVito,
  M.~Raison, A.~Tejani, S.~Chilamkurthy, B.~Steiner, L.~Fang, J.~Bai, and
  S.~Chintala, ``Pytorch: An imperative style, high-performance deep learning
  library,'' in \emph{Advances in Neural Information Processing Systems 32},
  H.~Wallach, H.~Larochelle, A.~Beygelzimer, F.~d\textquotesingle
  Alch\'{e}-Buc, E.~Fox, and R.~Garnett, Eds.\hskip 1em plus 0.5em minus
  0.4em\relax Curran Associates, Inc., 2019, pp. 8024--8035. [Online].
  Available:
  \url{http://papers.neurips.cc/paper/9015-pytorch-an-imperative-style-high-performance-deep-learning-library.pdf}
\BIBentrySTDinterwordspacing

\end{thebibliography}
\end{document}


%
\title{Appendix}
%
%
\author{Anonymous Authors}
\institute{}

\maketitle              

\section{Related works}
\label{sec: related}

\subsection{Network sparsity and sparse training}

Overparameterization is a common problem in Deep Neural Networks, meaning many redundant parameters exist in models. In machine learning, overparameterization usually means overfitting and a colossal waste of resources. Therefore, we need model sparsification methods to explore the redundancy of model parameters and compress the model by reducing these redundancy parameters. 
Network pruning methods, including non-structured and structured pruning, are the most common ways to compress models.

Further, on the theoretical basis of pruning and sparsifying a pre-trained network, Frankle and Carbin \cite{frankle2018lottery} propose the Lottery Ticket Hypothesis (LTH), which enables to train a sparse network from scratch. The Lottery Ticket Hypothesis states that any randomly initialized dense network contains sub-networks with the following properties. When independently trained, the initialized sub-network can achieve similar accuracy to the original network. 
On the basis of LTH, various sparse training methods have been proposed.


\subsection{Representation similarity measurements}
The representational similarity is an important concept in machine learning. General methods (including Euclidean distance, Manhattan distance, Cosine distance, etc.) compute statistics of two feature representations or utilize the geometric properties to calculate representation similarity.

However, when we need to measure the feature similarity between layers of neural networks, the above measurements are not robust and are sensitive to disturbance due to the randomness of neural networks. Frontier researchers propose two categories of methods to measure feature similarity between layers, including Canonical Correlation Analysis (CCA)~\cite{hardoon2004canonical, hotelling1992relations, ramsay1984matrix} and Centered Kernel Alignment (CKA)~\cite{kornblith2019similarity}. 
Briefly, CCA method maps the two feature matrices to be measured onto a set of bases and then maximizes the correlation. 

\subsection{Information bottleneck theory}

The information bottleneck (IB) theory proposed by Tishby \textit{et al.}~\cite{tishby2000information} is an extension of the rate distortion theory of source compression. 
This theory shows a trade-off between preserving relevant label information and obtaining efficient compression. Tishby \textit{et al.}~\cite{tishby2015deep} further research the relationship between information bottleneck theory and deep learning. They interpret the goal of deep learning as an information-theoretic trade-off between compression and prediction. 

\section{Additional analysis}
For convenience, we abbreviate $\textbf{CKA}_{Linear}$ as $\textbf{CKA}_L$ in the appendix.
\subsection{Proof of two lemmas}

\begin{lemma}
\label{Thm: lemma1}
Minimizing the distance between $X^TY$ and zero matrix is equivalent to minimizing the mutual information $I(X; Y)$ between representation $X$ and $Y$. 
\end{lemma}
\begin{lemma}
\label{Thm: lemma2}
Minimizing $\mathbf{CKA}_{L}(X_i, X_j)$ is equivalent to minimizing $I(X_i; X_j)$.
\end{lemma}
We prove the two Lemmas \cite{zheng2021information} as follows.

\begin{proof}
According to the definition of mutual information, $I(X; Y) = H(X) + H(Y) - H(X, Y)$. And according to the definition of multivariate Gaussian distribution, $H(X) = \frac{1}{2}ln((2\pi e)^D|\Sigma_X|)$. We assume that $X$ and $Y$ obey the multivariate Gaussian distribution, then we have $(X, Y) \sim N(0, \Sigma_{(X, Y)})$, in which
\begin{equation}
\label{eq1}
\Sigma_{(X, Y)} = 
	\begin{pmatrix}
	 \Sigma_X & \Sigma_{XY} \\
     \Sigma_{YX} & \Sigma_Y
	 \end{pmatrix}
\end{equation}
The assumption that $X$ and $Y$ obey the multivariate Gaussian distribution with zero mean is reasonable, for there exist batch normalization layers in neural networks.
Therefore, the mutual information between random variables $X$ and $Y$ following a Gaussian distribution is formulated as,
\begin{align}
\label{eq2}
  I(X; Y) = ln|\Sigma_X| + ln|\Sigma_Y| - ln|\Sigma_{(X, Y)}|
\end{align}
According to Everitt inequality, $|\Sigma_{(X, Y)}| \leq |\Sigma_X||\Sigma_Y|$, this inequality holds if and only if $\Sigma_{YX} = \Sigma_{XY^T} = X^TY$ is a zero matrix. Besides, the definition of linear HSIC is as follows, 
\begin{align}
\label{eq3}
    \mathrm{nHSIC}_{linear}(X, Y) = \frac{||Y^TX||_F^2}{||X^TX||_F||Y^TY||_F}
\end{align}
And according to the definition of the Frobenius norm, we have,
\begin{align}
\label{eq4}
    ||Y^TX||_F^2 = ||X^TY||_F^2
\end{align}
Combining Eq.~\eqref{eq2}, Eq.~\eqref{eq3}, Eq.~\eqref{eq4}, and Everitt inequality, we have, $I(X; Y) \geq 0$, and this inequality holds if and only if $X^TY$ is a zero matrix. Further, we have: When minimizing Eq.~\eqref{eq3}, we are also minimizing the distance between $X^TY = X^Tf(X)$ and the zero matrix, \textit{i.e.} \textbf{Lemma1}. (As $X$ and $Y$ are feature representations produced by different layers of the same network, we can represent $Y$ as $Y=f(X)$, in which $f$ is the sub-network between $X$ and $Y$. As a sub-network of the neural network, $f$ is a nonlinear function.) In other words, while minimizing $nHSIC_{linear}$, we also minimize the mutual information between the two Gaussian distributions, \textit{i.e.} \textbf{Lemma2}.
\end{proof}



\subsection{Proof of Theorem 1}

\begin{theorem}
\label{Thm: theorem1}
The $\epsilon$-sparsity and the sparsity of neural networks are approximately equivalent.
\end{theorem}

\begin{proof}

For convenience, we consider a neural network $N$ consisting of a full-connected layer. We mark the sparsity of $N$ as $s$. For a given input $X$, the network $N$ outputs $Y=WX+b$, in which $W$ and $b$ are the weight matrix and bias, respectively.

We mark the minimum non-zero absolute value of weight matrix $W$ as $min(W)$. We take an $\epsilon$ which meet the conditions of $\epsilon \rightarrow 0$ and $\epsilon << min(W)$. Obviously, as long as $\epsilon$ approaches 0, both conditions above are satisfied. Then we replace 0 values of the weight matrix $W$ with values smaller than $\epsilon$, and name this weight matrix as $W'$. We mark the neural network with weight matrix $W'$ as $N'$. Because we have $\epsilon << min(W)$, the $\epsilon$-sparsity of network $N'$ is $s$. The two networks are identical except that $N$ has the sparsity of $s$ while $N'$ has the $\epsilon$-sparsity of $s$.

For a given input $X$, we consider the outputs of the two networks $N$ and $N'$. The output of $N$ is $Y=WX+b$, and the output of $N'$ is $Y'=W'X+b$. We take the difference of $Y$ and $Y'$ as follows:
\begin{align}
\label{Eqn:difference}
    ||Y-Y'||=||(WX+b)-(W'X+b)||=||(W-W')X||
\end{align}
According to the definition of $W'$, each element of the $W-W'$ matrix approaches 0. So we have $||W-W'|| \rightarrow 0$, then we have $||Y-Y'||=||(W-W')X|| \rightarrow 0$, which means the outputs of $N$ and $N'$ are approximately the same. Because the two networks are identical except for their sparsity, we come to the conclusion that $\epsilon$-sparsity and sparsity of neural networks are approximately equivalent in practice.
\end{proof}

\subsection{Proof of Theorem 2}
\begin{theorem}
\label{Thm: theorem2}
$\mathcal{L_{\mathcal{C}}}$ is continuous and optimizable.
\end{theorem}

\begin{proof}
First we prove that $\mathcal{L_{\mathcal{C}}}$ is continuous. Because $\mathcal{L_{\mathcal{C}}}$ is the simple summation of $\mathbf{CKA}_{L}(X_i,X_j)$, we only need to prove $\mathbf{CKA}_{L}(X_i,X_j)$ is continuous for arbitrary element of matrix $X_i$ and $X_j$. Selecting an arbitrary element $X_{ik'l'}$ of matrix $X_i$, we rewrite $\mathbf{CKA}_{L}(X_i,X_j)$ as
\begin{align}
\label{Eqn:continuous1}
\mathbf{CKA}_{L}(X_i,X_j)
=\frac{\sum_{k=1}^n \sum_{l=1}^n (X_i^TX_j)_{kl}^2}{K\sqrt{\sum_{k=1}^n \sum_{l=1}^n (X_i^TX_i)_{kl}^2}},
\end{align}
in which $K=\sqrt{\sum_{k=1}^n \sum_{l=1}^n (X_j^TX_j)_{kl}^2}$ is a constant. Further, we have
\begin{align}
\label{Eqn:continuous2}
(X_i^TX_j)_{kl} = \sum_{t=1}^n \sum_{u=1}^n X_{itk}X_{jul} \\
(X_i^TX_i)_{kl} = \sum_{t=1}^n \sum_{u=1}^n X_{itk}X_{iul}.
\end{align}
Substituting Eq.~\eqref{Eqn:continuous2} into Eq.~\eqref{Eqn:continuous1}, we have
\begin{align}
\label{Eqn:continuous3}
\mathbf{CKA}_{L}(X_i,X_j)
=\frac{\sum_{k=1}^n \sum_{l=1}^n (\sum_{t=1}^n \sum_{u=1}^n X_{itk}X_{jul})^2}{K\sqrt{\sum_{k=1}^n \sum_{l=1}^n (\sum_{t=1}^n \sum_{u=1}^n X_{itk}X_{iul})^2}}.
\end{align}
Obviously, $\sum_{k=1}^n \sum_{l=1}^n (\sum_{t=1}^n \sum_{u=1}^n X_{itk}X_{jul})^2$ and $\sum_{k=1}^n \sum_{l=1}^n (\sum_{t=1}^n \sum_{u=1}^n X_{itk}X_{iul})^2$ are polynomials of $X_{ik'l'}$. We write them as $p_1(X_{ik'l'})$ and $p_2(X_{ik'l'})$, respectively. Then we rewrite $\mathbf{CKA}_{L}(X_i,X_j)$ as
\begin{align}
\label{Eqn:continuous4}
\mathbf{CKA}_{L}(X_i,X_j)=\frac{p_1(X_{ik'l'})}{K\sqrt{p_2(X_{ik'l'})}}.
\end{align}
From Eq.~\eqref{Eqn:continuous4}, it's obvious that $\mathbf{CKA}_{L}(X_i,X_j)$ is derivable.(It's derivative is $\frac{1}{K}p_1'(X_{ik'l'})p_2(X_{ik'l'})^{-\frac{1}{2}}+\frac{-1}{2K}p_1(X_{ik'l'})p_2(X_{ik'l'})^{-\frac{3}{2}}p_2'(X_{ik'l'})$) Because derivable functions must be continuous, we prove that $\mathbf{CKA}_{L}(X_i,X_j)$ is continuous. Therefore, $\mathcal{L_{\mathcal{C}}}$ is continuous.
Further, because $\mathcal{L_{\mathcal{C}}}$ has derivative at domain of function, we could carry out local optimization (by gradient descent) of $\mathcal{L_{\mathcal{C}}}$ to approach extremum values. That is to say, $\mathcal{L_{\mathcal{C}}}$ is optimizable in machine learning.


\end{proof}

\subsection{Proof of Theorem 3}
\begin{theorem}
\label{Thm: theorem3}
Minimizing $\mathcal{L_{\mathcal{C}}}$ minimizes the mutual information $R = I(X;\hat{X})$ between intermediate representation $\hat{X}$ and input representation $X$.
\end{theorem}

\begin{proof} According to Theorem \ref{Thm: theorem2}, CKA-SR is optimizable. In consideration of clear expression, we take a one-stage network or a single stage of multi-stage networks for example.


\begin{equation}
\begin{aligned}
\label{Eqn:proof1}
    & \min \mathcal{L_{\mathcal{C}}} \Leftrightarrow \min \beta \cdot \sum_{s=1}^1 \sum_{i=0}^{N_s} \sum_{j=0}^{N_s} \mathbf{CKA}_{L}(X_i, X_j)\\
    &\Leftrightarrow \min 2\times\sum_{i=0}^{N_s} \sum_{j=i+1}^{N_s} \mathbf{CKA}_{L}(X_i, X_j) + \sum_{i=0}^{N_s} \mathbf{CKA}_{L}(X_i, X_i)\\
    &\Leftrightarrow \min 2\times\sum_{i=0}^{N_s} \sum_{j=i+1}^{N_s} \mathbf{CKA}_{L}(X_i, X_j) + ({N_s}+1) \\
    &\Leftrightarrow \min \sum_{i=0}^{N_s} \sum_{j=i+1}^{N_s} \mathbf{CKA}_{L}(X_i, X_j)
\end{aligned}
\vspace{-3pt}
\end{equation}

We define several successive layers (from the $i^{th}$ to the $j^{th}$ layers) of the original network as a layer-sub-network. Then the $R = I(X;\hat{X})$ of the layer-sub-network can be expressed as $R_{ij} = I(X_i; X_j)$, in which $j>i$, $X_i$ is the input of the layer-sub-network, and $X_j$ is the map of $X_i$ through the layer-sub-network. According to Lemma \ref{Thm: lemma2}, minimizing $\mathbf{CKA}_{L}(X_i, X_j)$ is equivalent to minimizing $I(X_i; X_j)$, \emph{i.e.}, $R_{ij}$. Substituting $R_{ij}$ into the minimization objective, we have:


\begin{equation}
\begin{aligned}
\label{Eqn:proof2}
    \min \sum_{i=0}^{N_s} \sum_{j=i+1}^{N_s} \mathbf{CKA}_{L}(X_i, X_j) \Leftrightarrow \min \sum_{i=0}^{N_s} \sum_{j=i+1}^{N_s} R_{ij}
\end{aligned}
\vspace{-2pt}
\end{equation}

That is, minimizing the CKA-SR is equivalent to minimizing the sum of mutual information $R$ of all layer-sub-networks of the network, thus minimizing the mutual information between intermediate representation and input representation.
\end{proof}

\subsection{Potential negative societal impacts and limitations}
To the best of our knowledge, there exists almost no potential negative societal impact in our method, which is a plug-and-play regularization in sparse training and pruning methods. However, considering the expenses to select the hyperparameter and calculate loss function, it's necessary to think about reducing the energy consumption and protecting the environment while conducting experiments.

\section{Additional experimental results}
\subsection{Computation settings}
We conduct our experiments on a server with 8 Tesla V100 GPUs and 40 CPUs. Specifically, we use 1 Tesla V100 GPU for CIFAR experiments and we use 8 Tesla V100 GPUs for ImageNet experiments.

\begin{figure}[!tb]
	\centering
	\subfigure[Baseline]{
		\begin{minipage}[t]{0.24\linewidth}
			\centering
			\includegraphics[width=\linewidth]{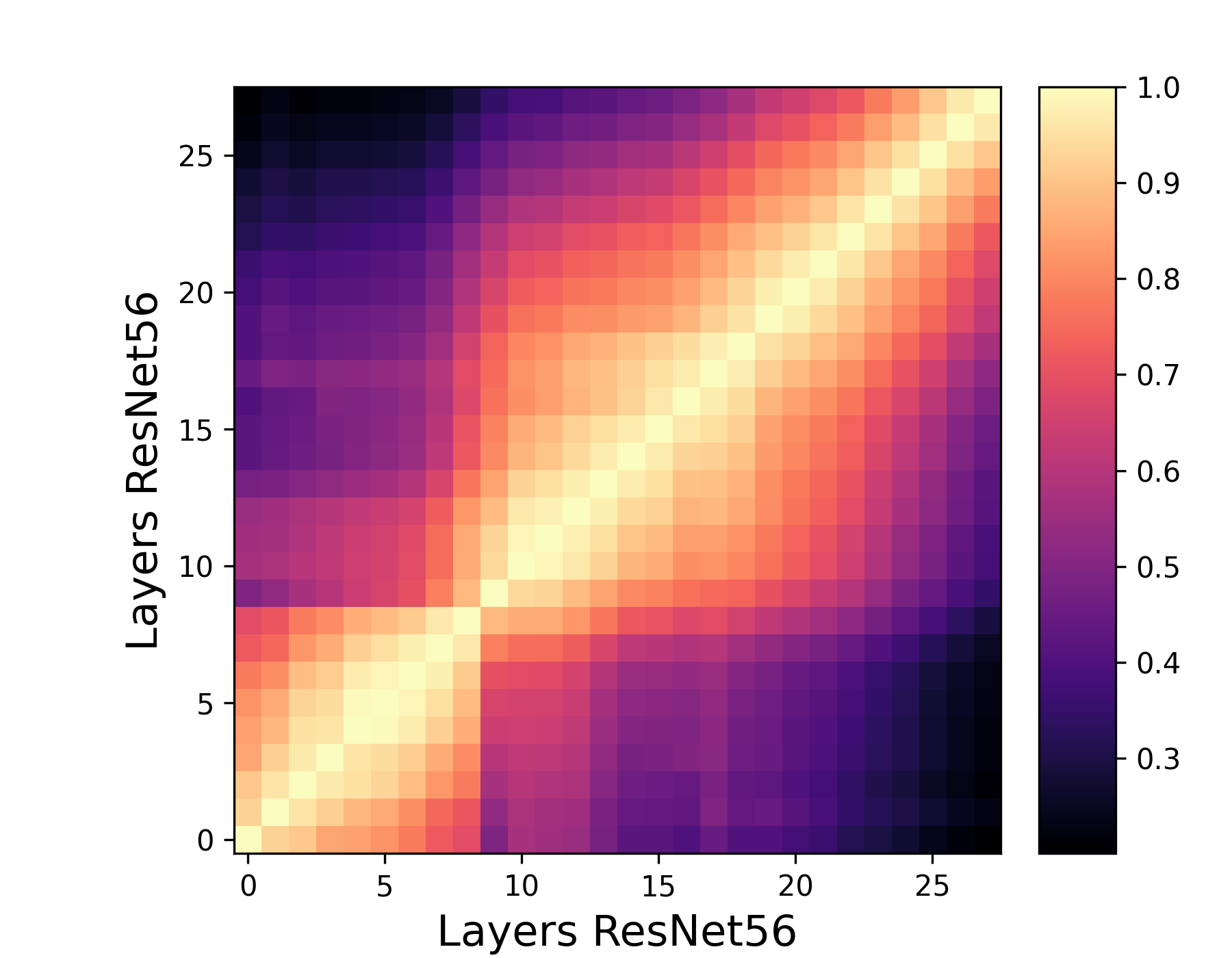}
			\includegraphics[width=\linewidth]{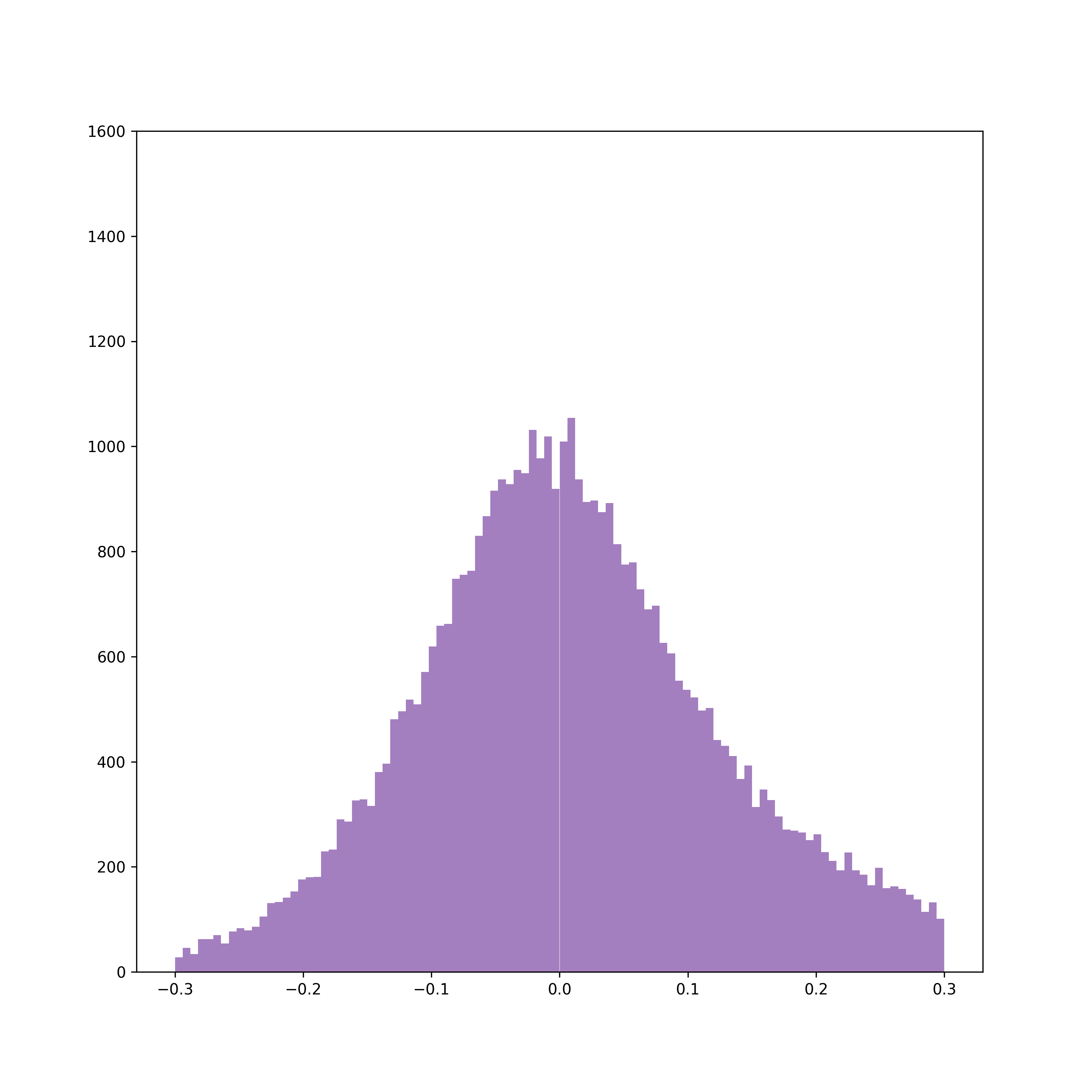}
			\label{fig:base1}
		\end{minipage}%
	}%
	\subfigure[$\beta=0.0005$]{
		\begin{minipage}[t]{0.24\linewidth}
			\centering
			\includegraphics[width=\linewidth]{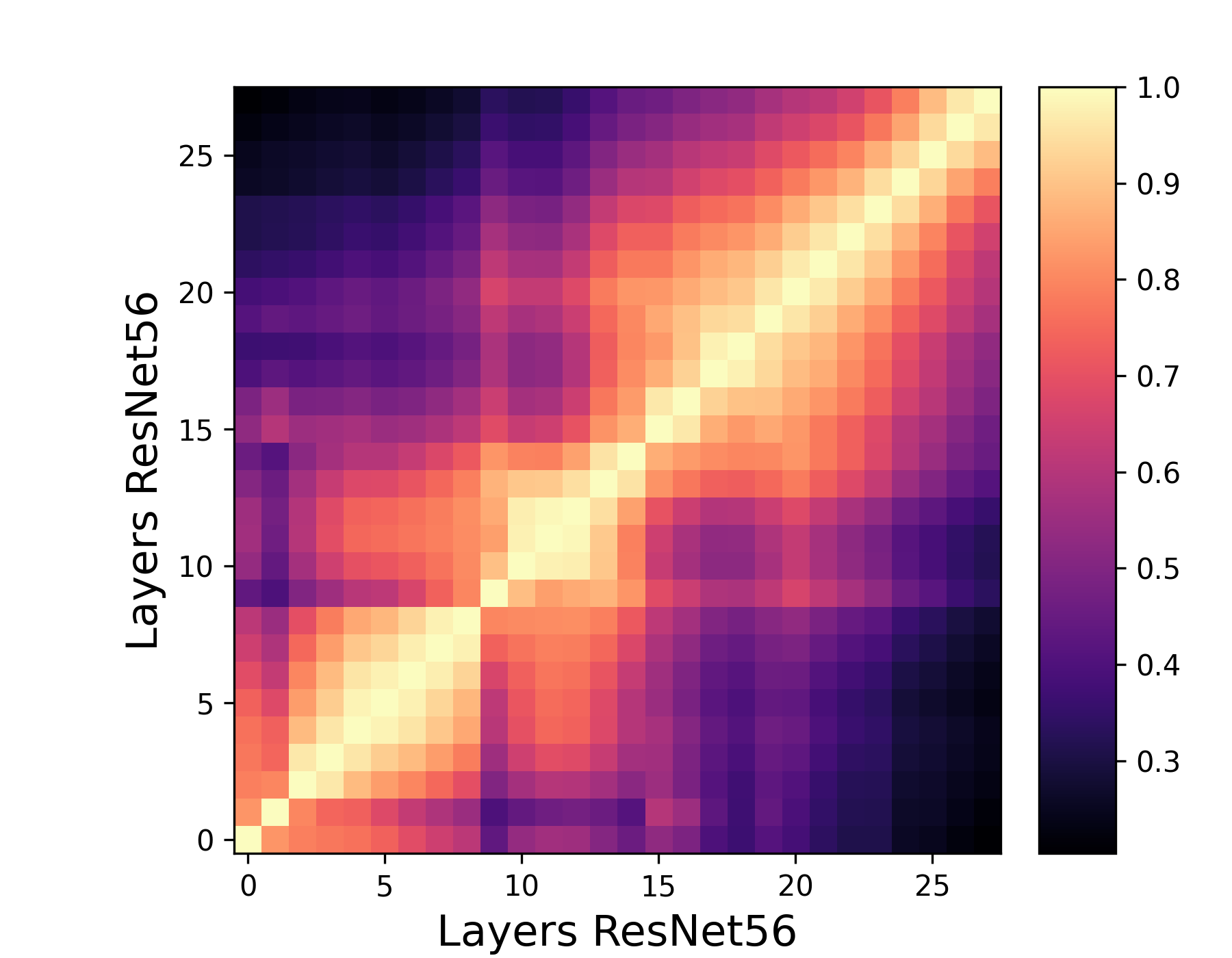}
			\includegraphics[width=\linewidth]{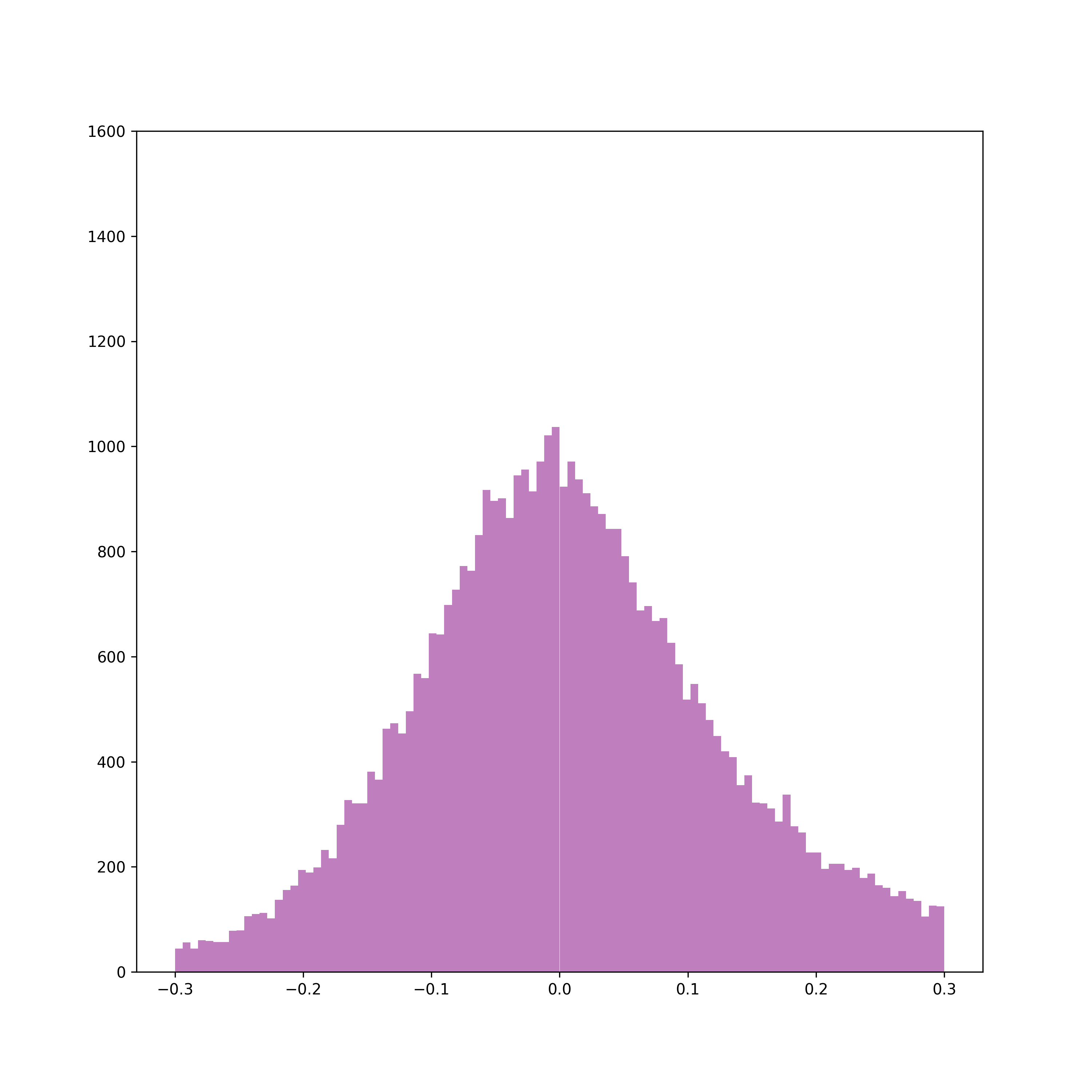}
			\label{fig:0.0005}
		\end{minipage}%
	}%
	\subfigure[$\beta=0.002$]{
		\begin{minipage}[t]{0.24\linewidth}
			\centering
			\includegraphics[width=\linewidth]{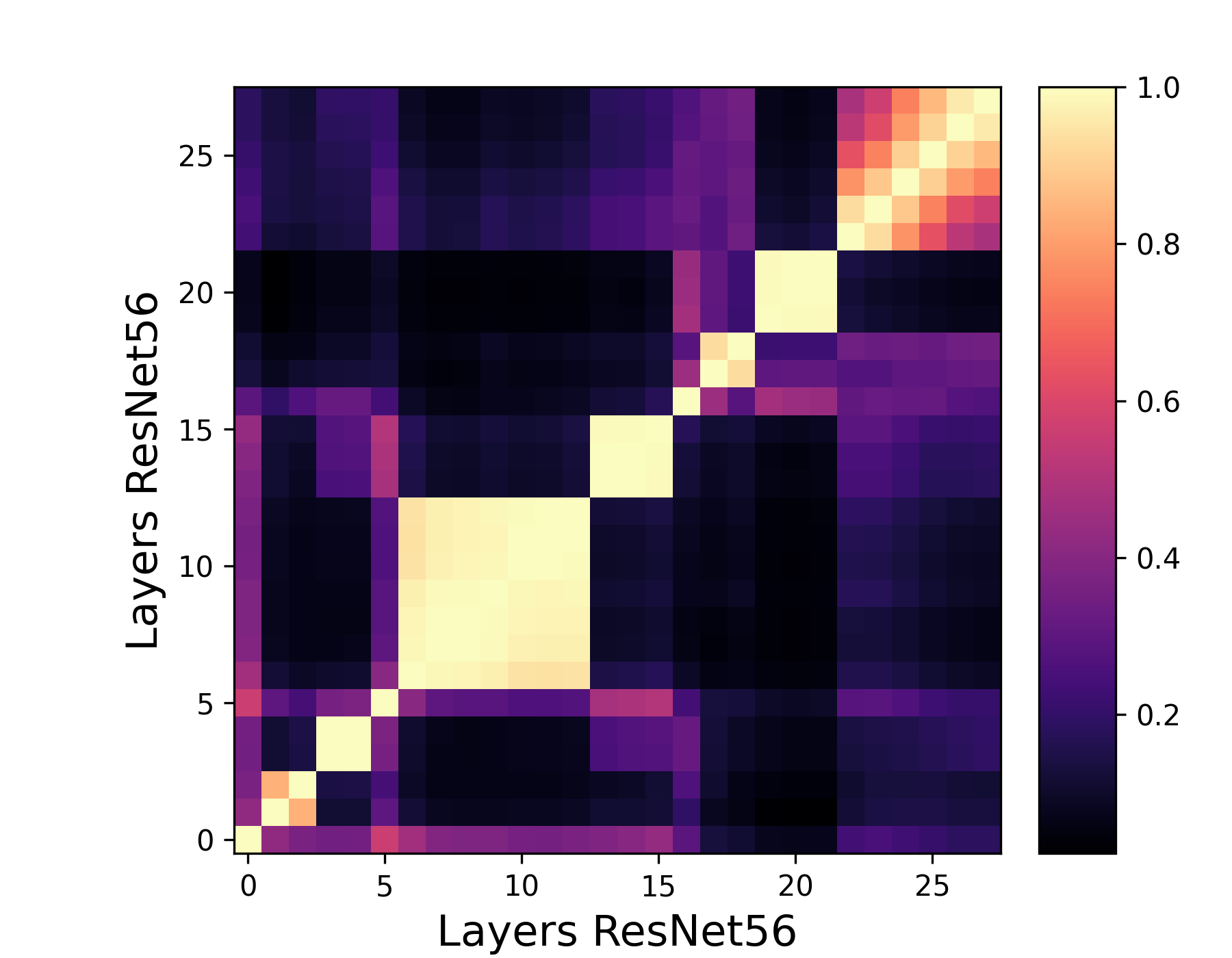}
			\includegraphics[width=\linewidth]{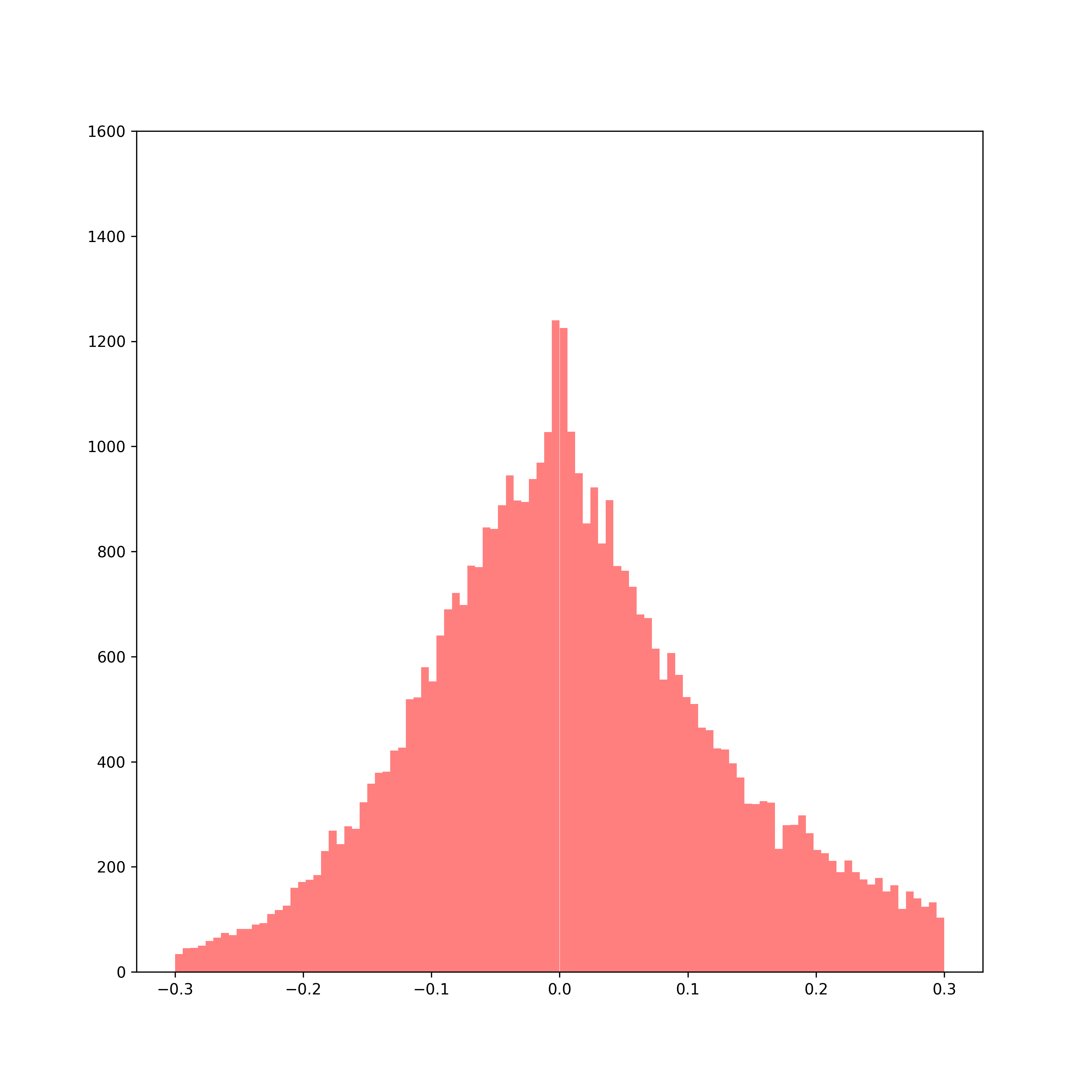}
			\label{fig:0.002}
		\end{minipage}%
	}%
	\subfigure[$\beta=0.005$]{
		\begin{minipage}[t]{0.24\linewidth}
			\centering
			\includegraphics[width=\linewidth]{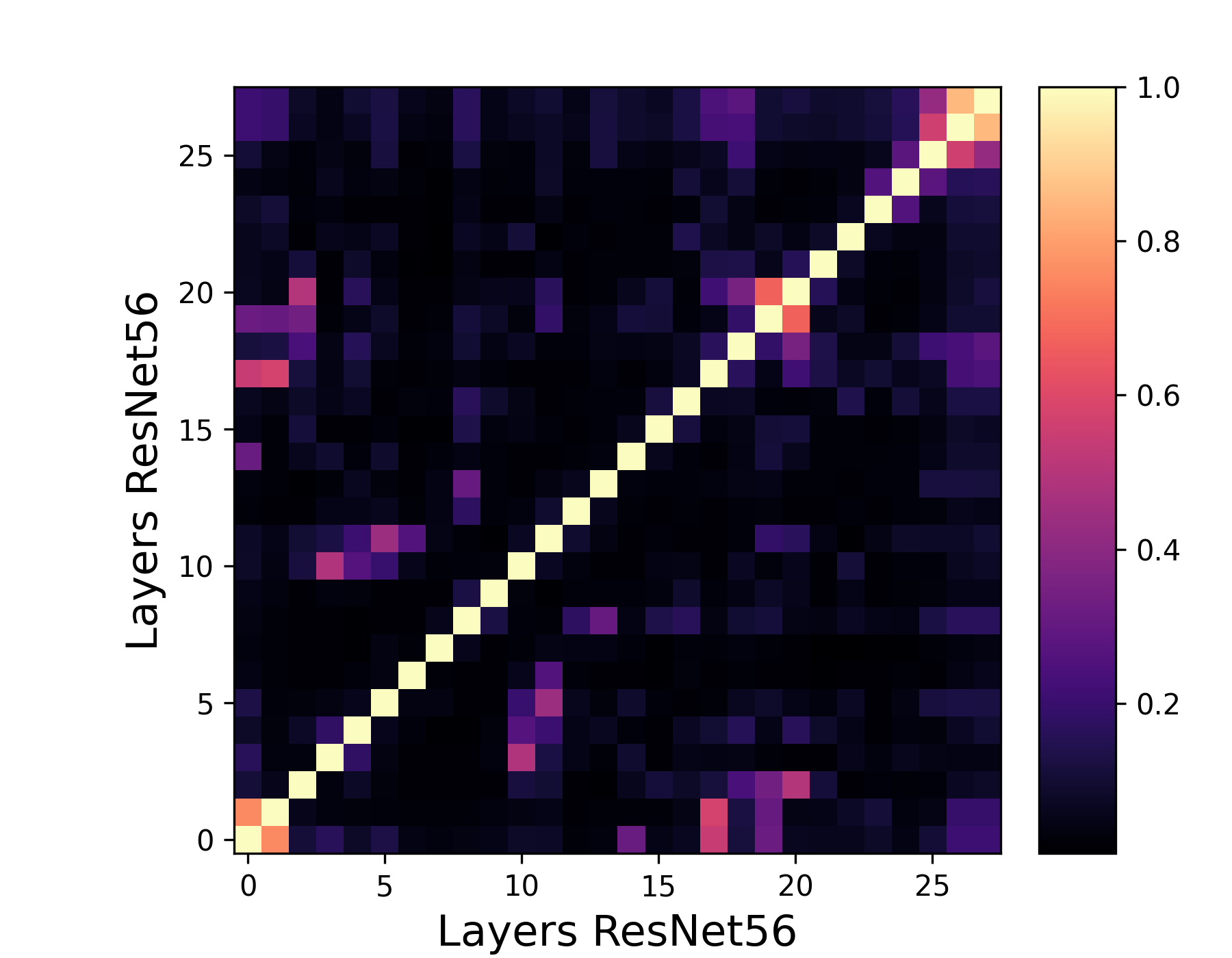}
			\includegraphics[width=\linewidth]{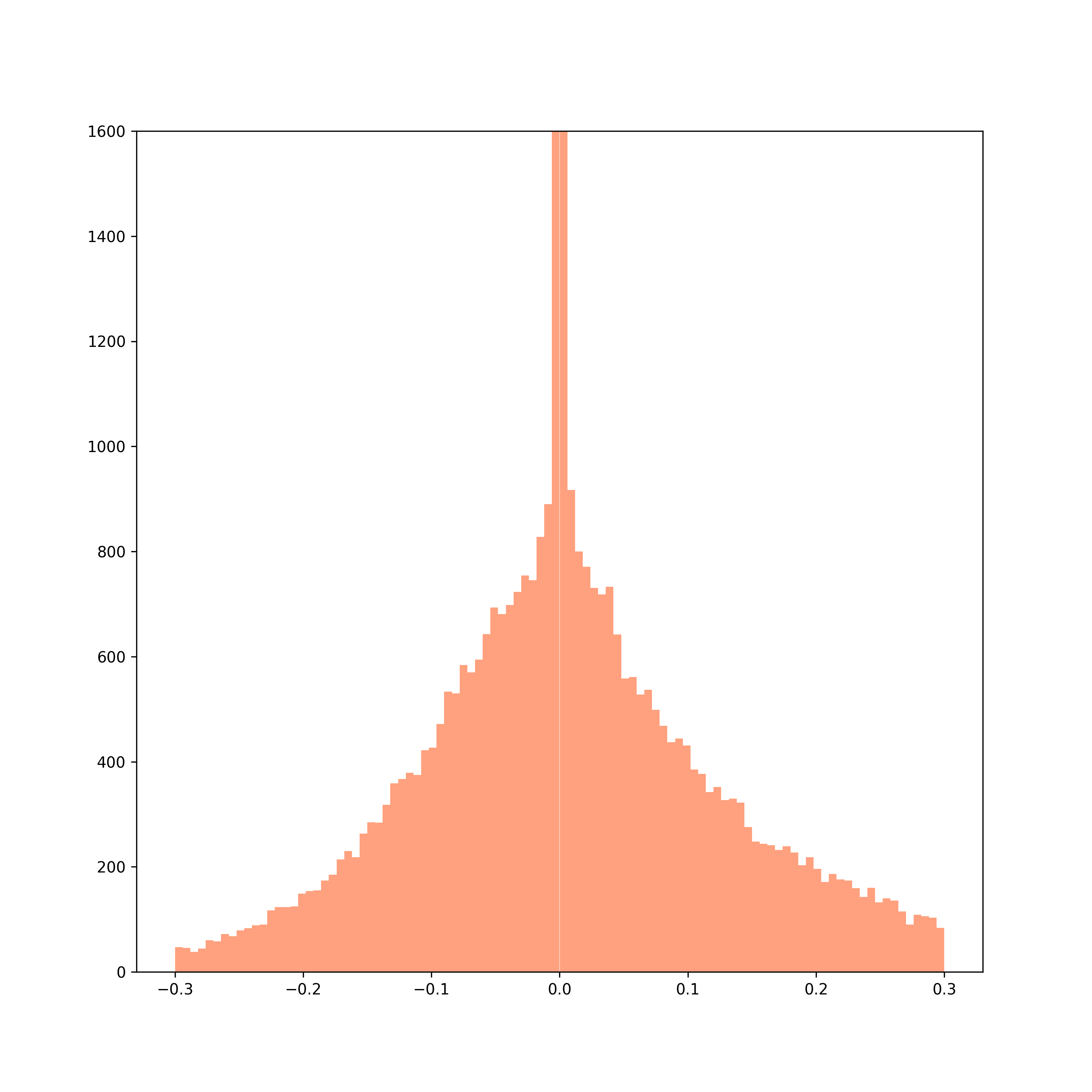}
			\label{fig:0.005}
		\end{minipage}%
	}%
	\caption{Visualizations of CKA similarity and corresponding parameter distribution (ResNet56)}
	\label{fig:visual1}
\end{figure}

\subsection{Dynamic token sparsification with CKA-SR}
To validate our CKA-SR method on vision transformers, we conduct dynamic token sparsification~\cite{rao2021dynamicvit} with our CKA-SR. Specifically, we calculate the CKA similarity between tokens produced by our CKA-SR model and introduce this similarity into the decision process of dynamic token sparsification. We rewrite the keeping probability of the $i^{th}$ token $\pi_{i,1}$ as 
\begin{align}
\label{Eqn:pi}
\pi'_{i,1}=\pi_{i,1}-\sum_{j \neq i}\mathbf{CKA_L}(X_i, X_j)
\end{align}
in which $j$ is all the other tokens except $i$. The whole algorithm is shown in Algorithm \ref{alg:algorithm}. And the results are shown in Table \ref{tab: DTS}.

\begin{algorithm}[tb]
\caption{Dynamic token sparsification with CKA-SR}
\label{alg:algorithm}
\textbf{Initialize}: ViT model pretrained with CKA-SR $f_W$, dataset $\mathcal{D}$, pruning ratio $r$ \\
\begin{algorithmic}[1] 
\STATE Calculate the feature map of each patch, and name it as a token.
\STATE Update each token through multiple transformer layers.
\WHILE{Numbers of remaining tokens $N_r$/Numbers of all tokens $N_t$ $>$ $r$}
\STATE Use the prediction module to calculate a mask for pruning, in which the keeping probability of the $i^{th}$ token equals $\pi'_{i,1}$ in Eq.~\eqref{Eqn:pi}.
\STATE Prune the tokens with the mask.
\STATE Update the remaining tokens through transformer layers.
\ENDWHILE
\STATE Conduct classification with the remaining few tokens.
\end{algorithmic}
\end{algorithm}

\begin{table}[H]
  \small
  \caption{The accuracy (\%) of DeiT-Tiny using CKA-SR for dynamic token sparsification on ImageNet}
  \label{tab: DTS}
  \centering
  \begin{adjustbox}{max width=\linewidth}
  \begin{tabular}{|l |l |l |l |l|}
    \hline
     Model & Hyperparameter $\rho$ & Params & FLOPs & Top-1 Accuracy \\
    \hline
     DeiT-Tiny & -- & 5.7M & 1.3G & 72.20 \\
    \hline
     Dynamic-DeiT-Tiny & 0.7 & 5.9M & 0.9G & 71.11  \\
     +CKA-SR & 0.7 & 5.9M & 0.9G & \textbf{71.35} \\
    \hline
     Dynamic-DeiT-Tiny & 0.9 & 5.9M & 1.2G & 72.31 \\
     +CKA-SR & 0.9 & 5.9M & 1.2G & \textbf{72.52} \\
    \hline
  \end{tabular}
  \end{adjustbox}
  \vspace{-10pt}
\end{table}

\subsection{Further ablation studies}

\subsubsection{Ablation study of samples and batches}
As our CKA-SR is proposed to explicitly reduce the interlayer similarity of network parameters instead of feature maps themselves, we could utilize several samples of each batch (\textit{generally 8 samples when the batch size is 128 or 256}) to compute CKA-SR. Besides, to reduce the expenses, we could calculate CKA-SR once out of several (\textit{generally 5 or 10}) batches. We conduct the ablation study of samples and batches with Random Sparse Training~\cite{liu2021unreasonable} method on CIFAR-10 dataset, and the sparsity of the ResNet20 models is 0.95. The results are shown in Table \ref{ablation_s&b}. \\

\begin{table}[H]
 \vspace{-10pt}
  \small
  \caption{Ablation study of samples and batches (\textit{Batch-m} means we calculate CKA-SR once out of m batches, and \textit{Sample-n} means we utilize n samples of each batch to calculate CKA-SR)}
  \label{ablation_s&b}
  \centering
  \begin{adjustbox}{max width=\linewidth}
  \begin{tabular}{|l |l |l |l |l |l |l |l |l |l |l |l |l |l |l |}
    \hline
     $\beta$ & Baseline & 1e-05 & 2e-05 & 5e-05 & 8e-05 & 1e-04 & 2e-04 & 5e-04 & 8e-04 & 1e-03 & 2e-03 & 5e-03 & 1e-02\\
    \hline
     Original CKA-SR & 84.16 & 84.69 & 84.38 & 84.42 & 84.45 & 84.11 & 84.40 & 84.39 & \textbf{85.03} & 84.82 & 84.08 & 83.86 & 84.03\\
    \hline
     Sample-16 and Batch-5 & 84.16 & 84.06 & 84.52 & 84.44 & \textbf{84.90} & 84.59 & 84.59 & 84.42 & 84.61 & 84.40 & 84.61 & 84.71 & 84.17\\
    \hline
     Sample-8 and Batch-5 & 84.16 & 84.19 & 84.51 & 84.44 & 84.45 & \textbf{84.97} & 83.88 & 84.68 & 84.41 & 84.56 & 84.52 & 84.00 & 84.41\\
    \hline
     Sample-4 and Batch-5 & 84.16 & 84.71 & 84.35 & 84.62 & 84.20 & 83.85 & 84.30 & 84.87 & \textbf{84.92} & 84.62 & 84.67 & 84.27 & 76.08\\
    \hline
     Sample-8 and Batch-10 & 84.16 & 84.28 & 84.27 & \textbf{84.62} & 84.14 & 84.27 & 84.06 & 84.25 & 84.17 & 84.19 & 84.59 & 84.43 & 84.08\\
    \hline
  \end{tabular}
  \end{adjustbox}
  \vspace{-10pt}
\end{table}

To conclude, we can just utilize several samples of each batch and calculate CKA-SR only once out of several batches. By using these implementations above, the performance is slightly worse than full CKA-SR (our original method which calculates CKA-SR for each sample in each batch), but it's still better than the baseline method. At the same time, by using these implementations, the expenses of calculating CKA-SR are greatly reduced.


\subsection{Visualizations of CKA similarity and parameter distribution}
As an extension of Figure 1 of our paper, we show the visualizations of CKA similarity and corresponding parameter distributions of our CKA-SR sparsity models, and compare them with baseline models. We conduct the visualization with Random Sparse Training~\cite{liu2021unreasonable} method on CIFAR-10 dataset, and the sparsity of ResNet56 and ResNet20 models is 0.95. The results are shown in Figure \ref{fig:visual1} and Figure \ref{fig:visual2} (Note that in Figure \ref{fig:0.05}, the Y-axis is up to 1750, and in Figure \ref{fig:0.08}, the Y-axis is up to 2800, which means the parameters are extremely concentrated around 0).

\begin{figure}[!tb]
	\centering
	\subfigure[Baseline]{
		\begin{minipage}[t]{0.24\linewidth}
			\centering
			\includegraphics[width=\linewidth]{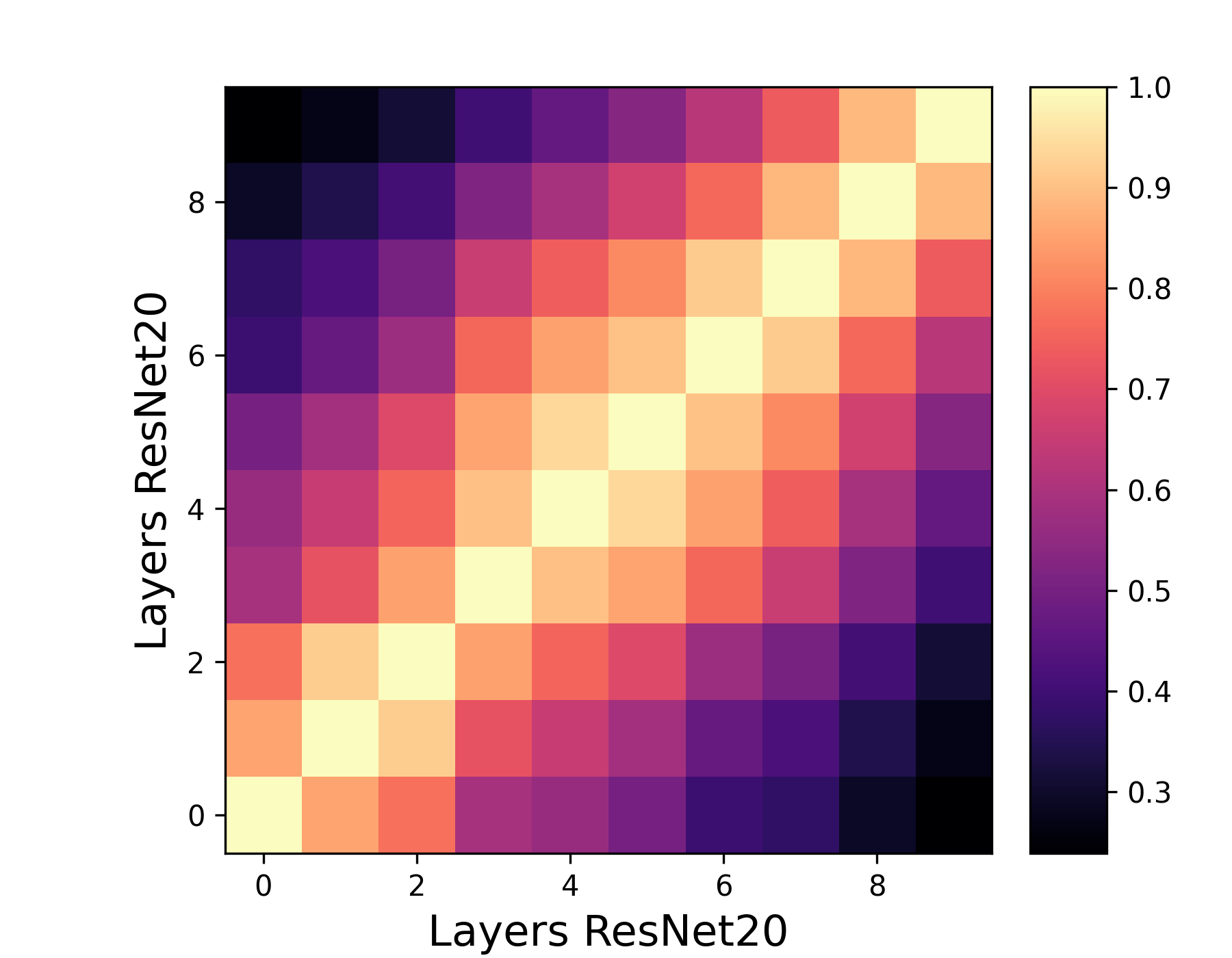}
			\includegraphics[width=\linewidth]{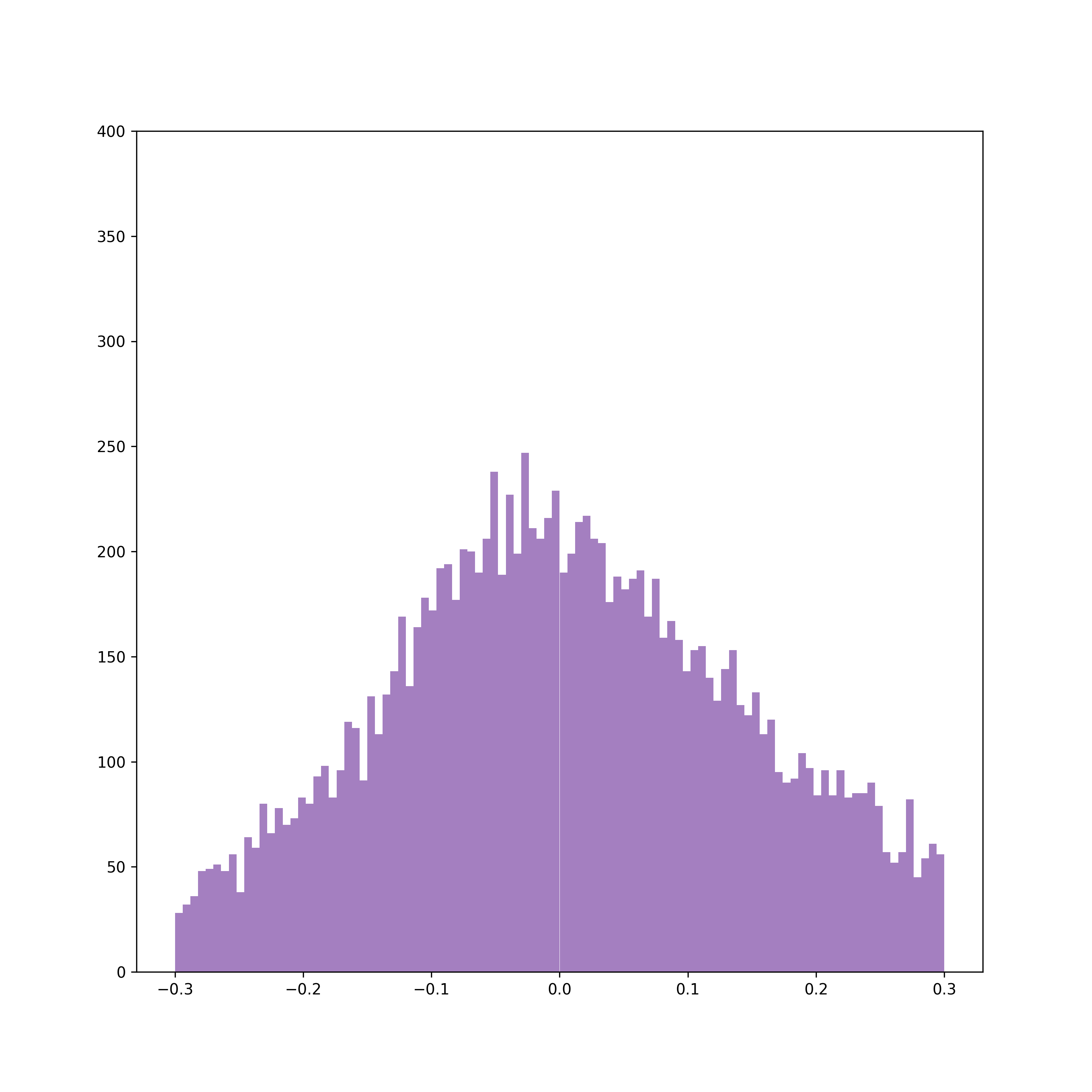}
			\label{fig:base2}
		\end{minipage}%
	}%
	\subfigure[$\beta=0.01$]{
		\begin{minipage}[t]{0.24\linewidth}
			\centering
			\includegraphics[width=\linewidth]{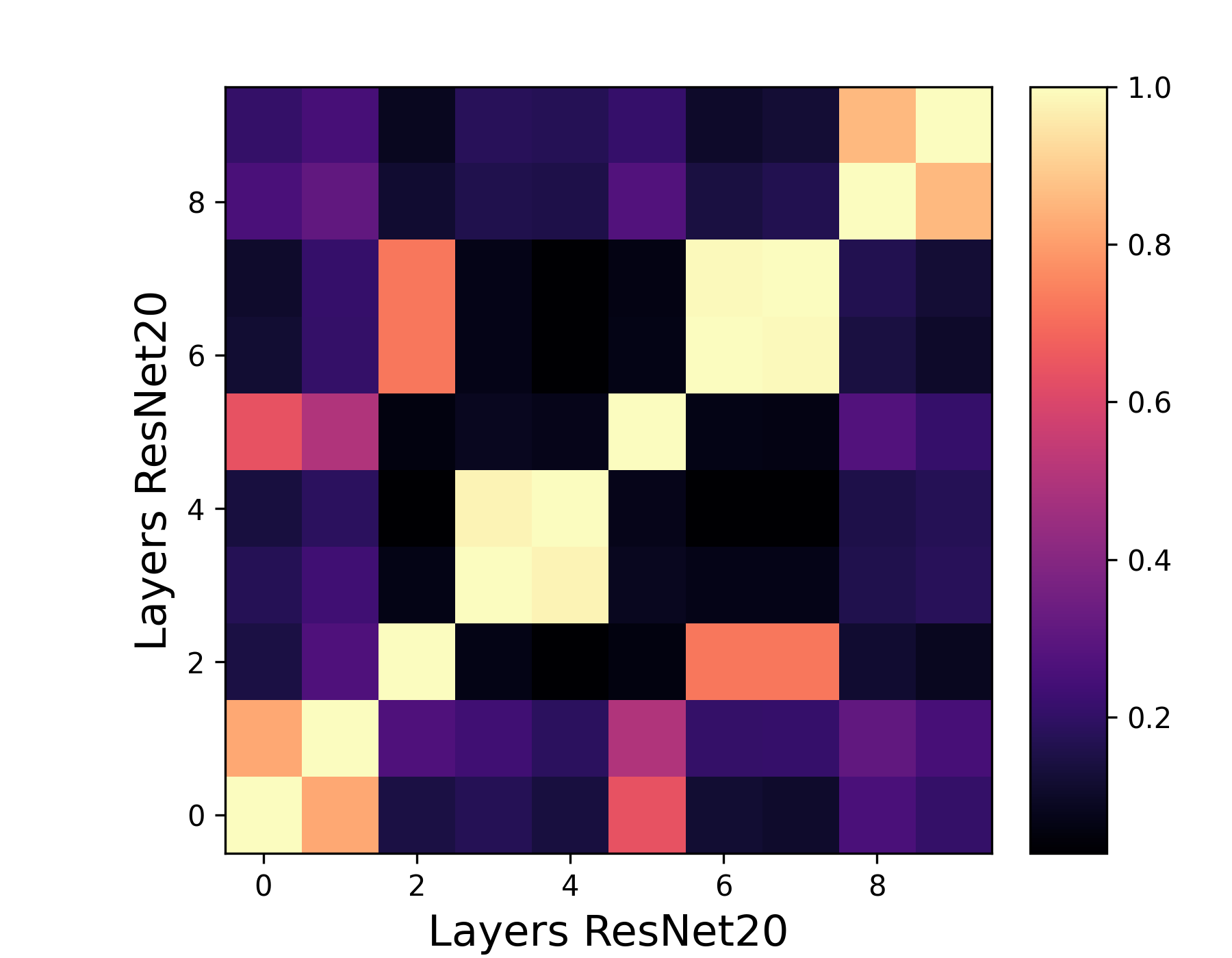}
			\includegraphics[width=\linewidth]{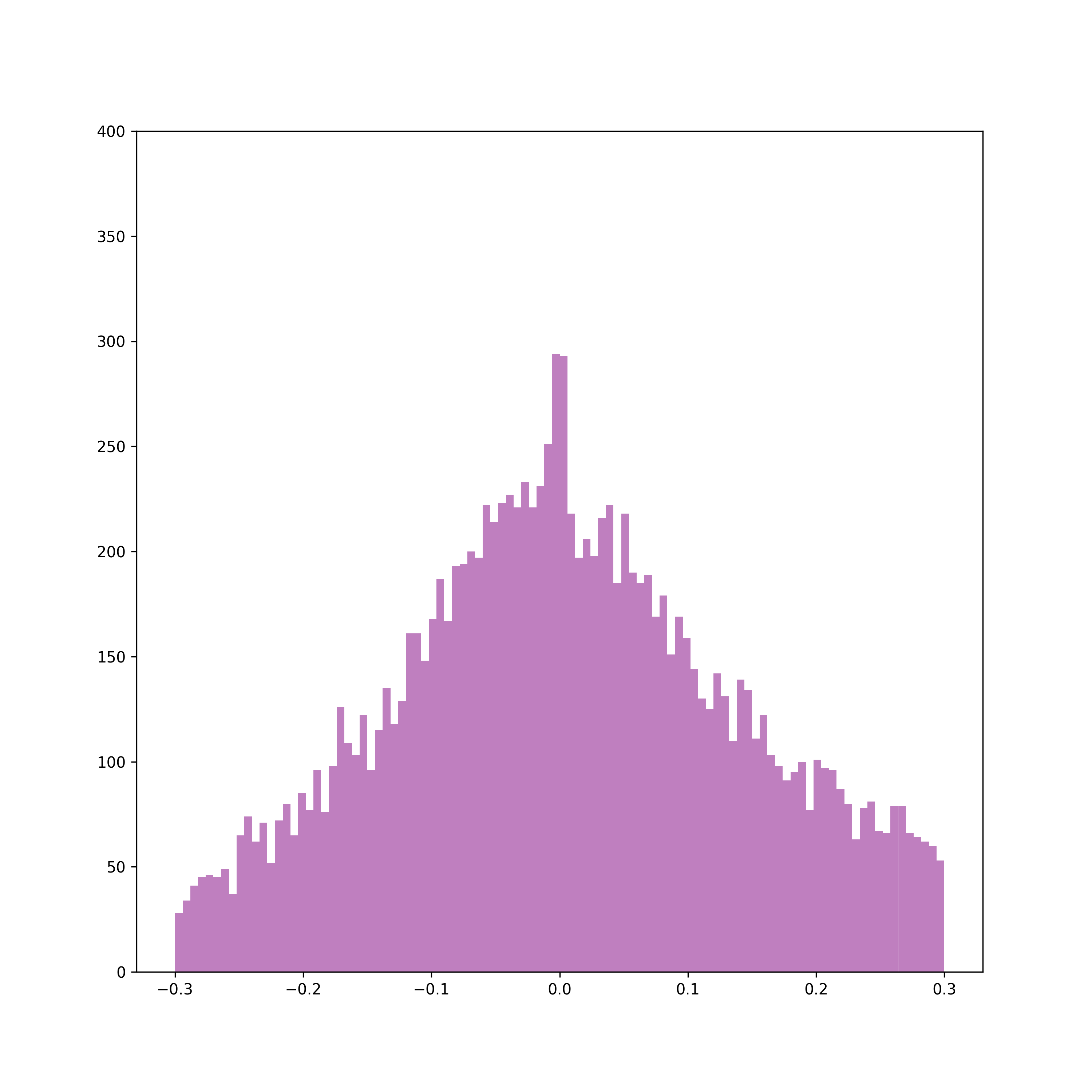}
			\label{fig:0.01}
		\end{minipage}%
	}%
	\subfigure[$\beta=0.05$]{
		\begin{minipage}[t]{0.24\linewidth}
			\centering
			\includegraphics[width=\linewidth]{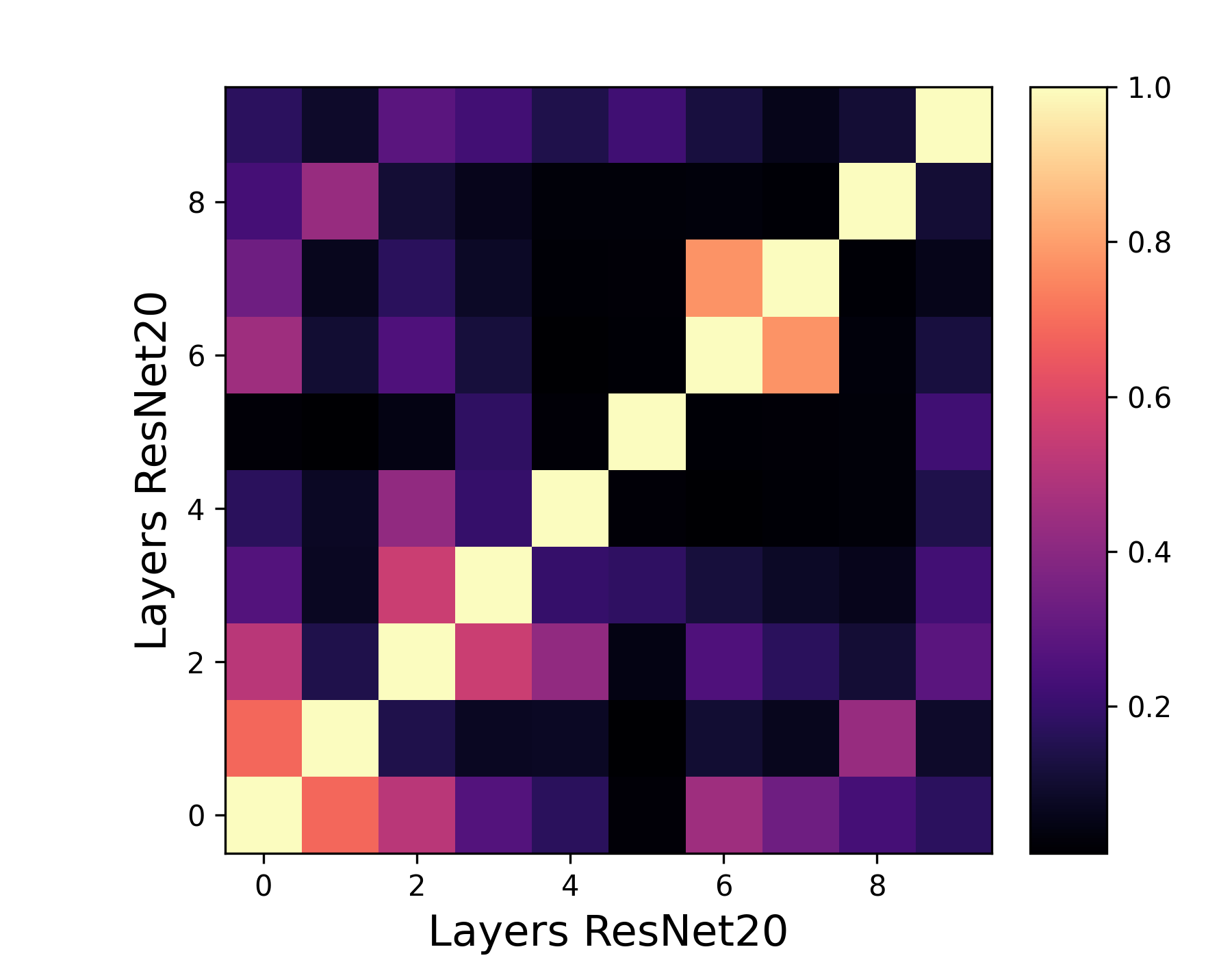}
			\includegraphics[width=\linewidth]{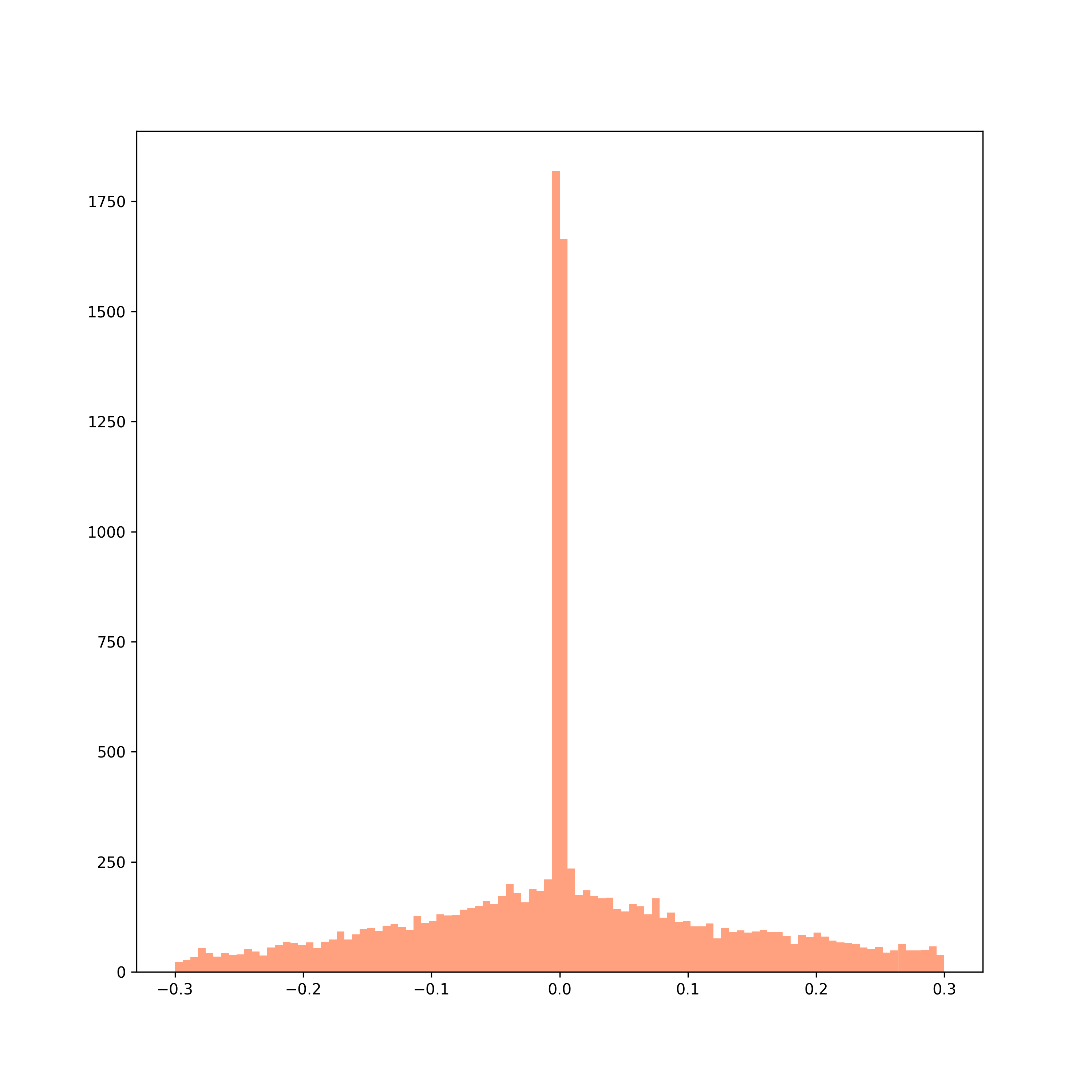}
			\label{fig:0.05}
		\end{minipage}%
	}%
	\subfigure[$\beta=0.08$]{
		\begin{minipage}[t]{0.24\linewidth}
			\centering
			\includegraphics[width=\linewidth]{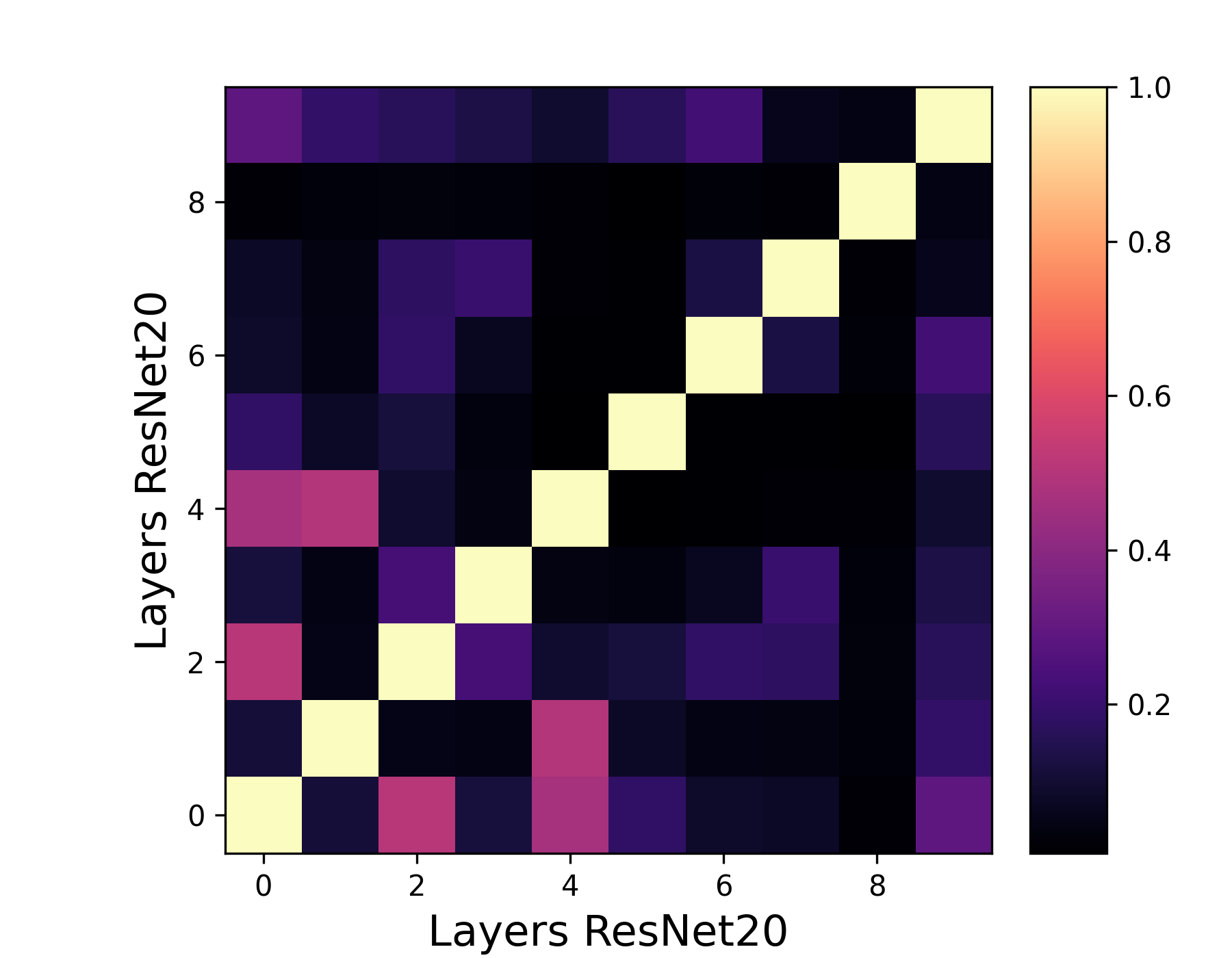}
			\includegraphics[width=\linewidth]{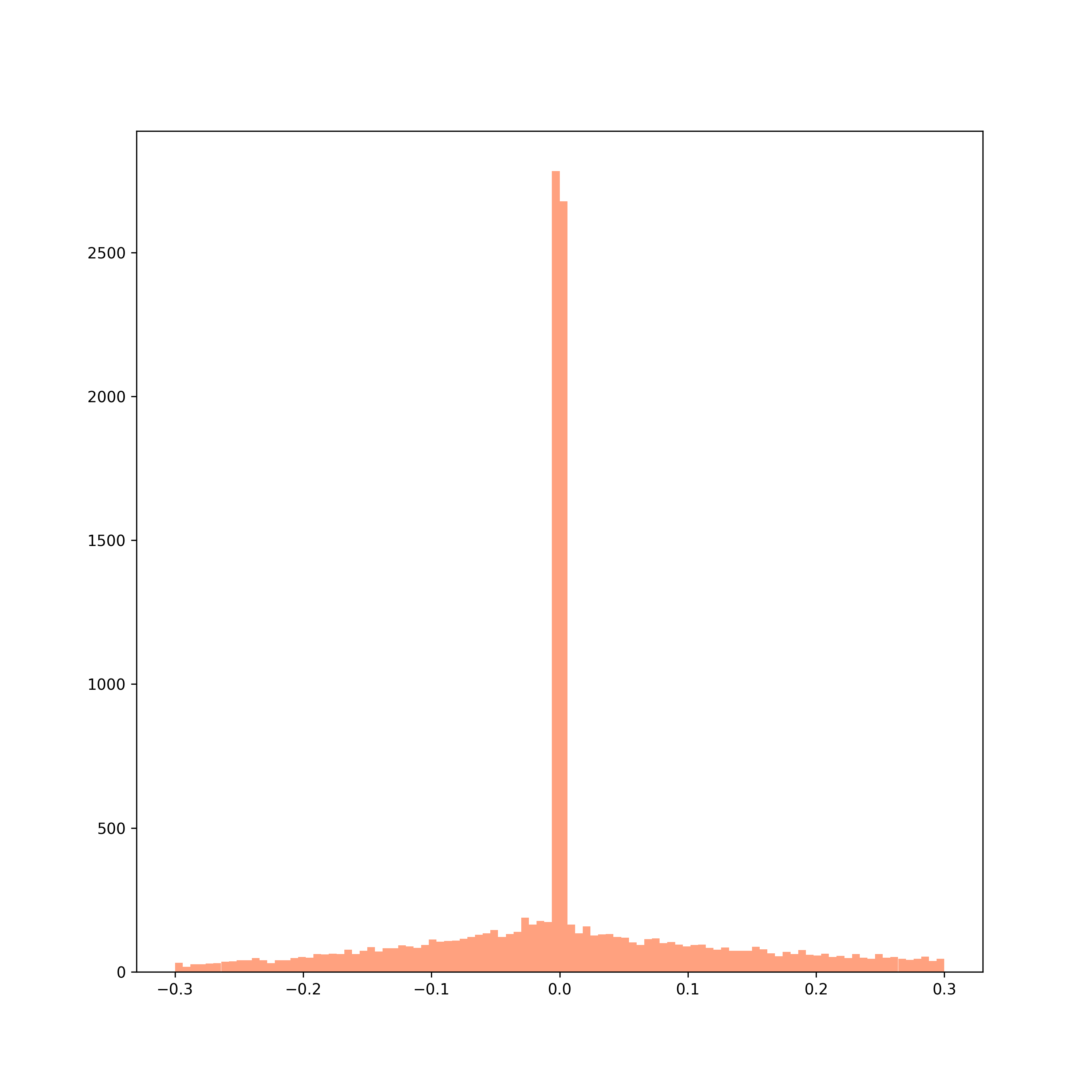}
			\label{fig:0.08}
		\end{minipage}%
	}%
	\caption{Visualizations of CKA similarity and corresponding parameter distribution (ResNet20)}
	\label{fig:visual2}
\end{figure}

We can conclude that with the increase of hyperparameter $\beta$, the interlayer feature similarity decreases and the network sparsity increases.


\subsection{Comparison with other similarity measurements}
To compare the performance of CKA and CCA in sparse training, we introduce CCA into sparse training. We find that CCA also increases the performance of sparse training, but the increment is smaller than CKA-SR. Besides, CCA is less robust than CKA, which means sparse training with CCA is harder to converge than CKA-SR. 

We conduct the experiments with Random Sparse Training~\cite{liu2021unreasonable} on the CIFAR-10 dataset, and the sparsity is 0.95. The results are shown in Table \ref{tab: compCCA}.\\

\begin{table}[H]
 \vspace{-10pt}
  \small
  \caption{Comparison of CKA-SR and CCA}
  \label{tab: compCCA}
  \centering
  \begin{adjustbox}{max width=\linewidth}
  \begin{tabular}{|l |c |c |c |}
    \hline
     Settings & Baseline & CCA & CKA-SR\\
    \hline
     Top1-Acc & 84.16 & 84.58 & 85.03\\
    \hline
  \end{tabular}
  \end{adjustbox}
  \vspace{-10pt}
\end{table}

\subsection{Models pre-trained with CKA-SR}
As an additional experimental result, we show the performance increment of models pre-trained with CKA-SR in Table \ref{tab: premodels}. Besides, we show the parameter distribution of these models in Figure \ref{fig: premodels} to prove that our CKA-SR increases the sparsity of the pre-trained models.

\begin{table}[H]
  \small
  \caption{The accuracy (\%) when plugging CKA-SR to pre-training of ResNet18 and ResNet50 models on CIFAR100 and ImageNet}
  \label{tab: premodels}
  \centering
  \begin{adjustbox}{max width=\linewidth}
  \begin{tabular}{|l |l |l |l |}
    \hline
     \multirow{2}{*}{Backbone} & \multirow{2}{*}{Methods} &\multicolumn{2}{c|}{Dataset} \\
     \cline{3-4}
       &  & CIFAR100 & ImageNet \\
     \hline
    \multirow{2}{*}{ResNet18} & Baseline & 75.61 & 69.62\\
     & +CKA-SR & \textbf{76.32} & \textbf{69.90}\\
     \hline
    \multirow{2}{*}{ResNet50} & Baseline & 77.39 & 76.15\\
     & +CKA-SR & \textbf{78.94} & \textbf{76.39}\\
    \hline
  \end{tabular}
  \end{adjustbox}
  \vspace{-10pt}
\end{table}

\begin{figure}[H]
	\centering
	\subfigure[Baseline ResNet18]{
		\begin{minipage}[t]{0.24\linewidth}
			\centering
			\includegraphics[width=\linewidth]{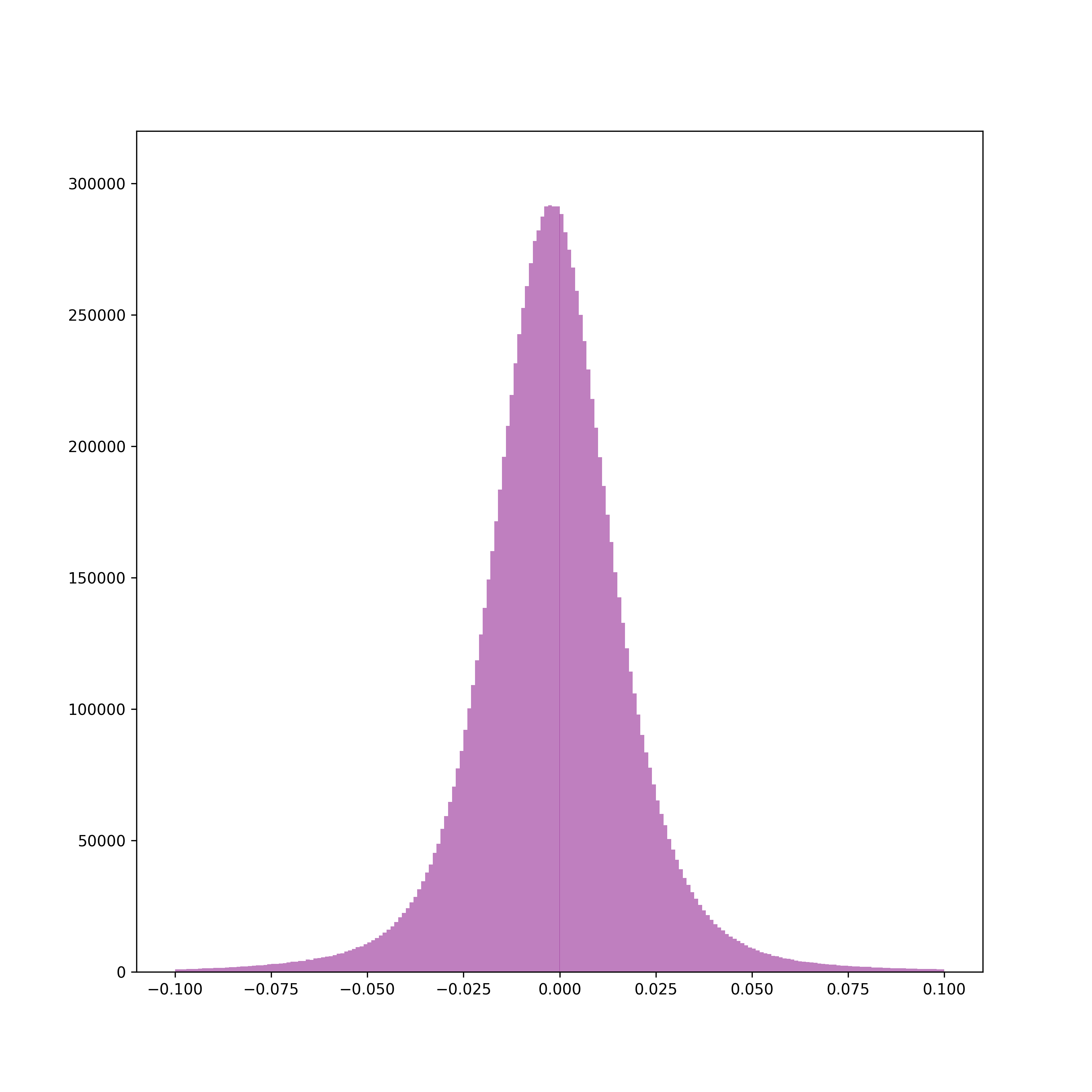}
			\label{fig:18base}
		\end{minipage}%
	}%
	\subfigure[CKA-SR ResNet18]{
		\begin{minipage}[t]{0.24\linewidth}
			\centering
			\includegraphics[width=\linewidth]{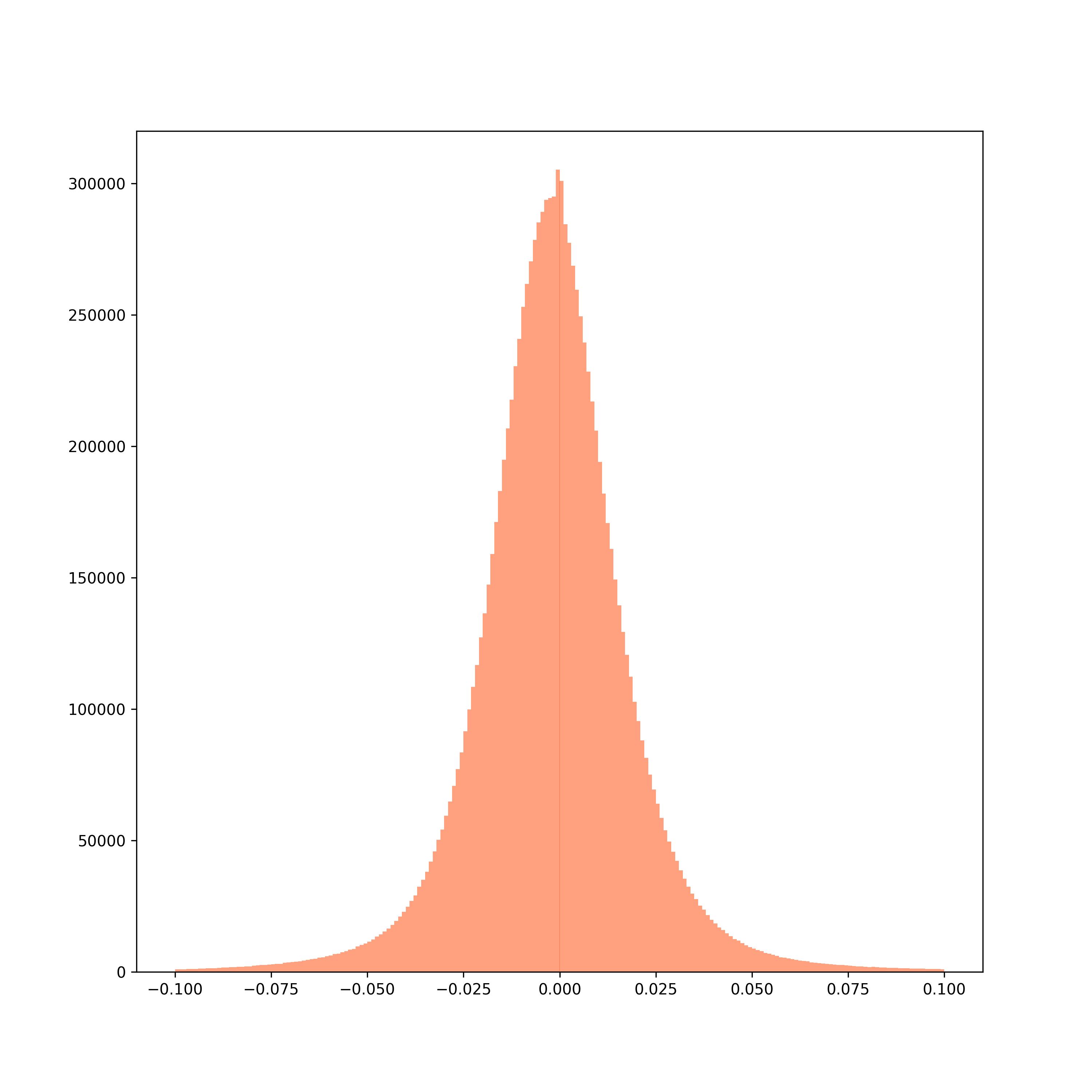}
			\label{fig:18cka}
		\end{minipage}%
	}%
	\subfigure[Baseline ResNet50]{
		\begin{minipage}[t]{0.24\linewidth}
			\centering
			\includegraphics[width=\linewidth]{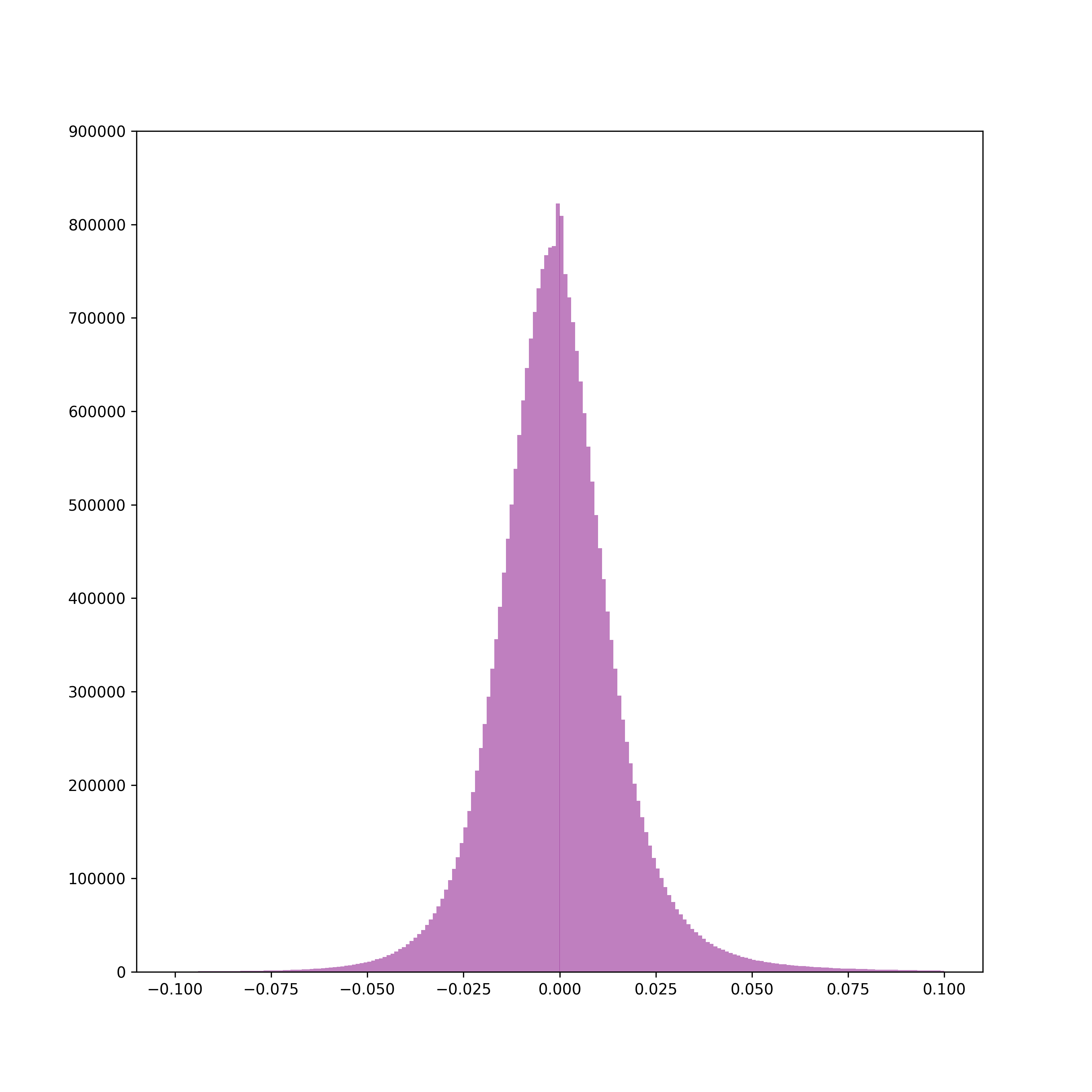}
			\label{fig:50base}
		\end{minipage}%
	}%
	\subfigure[CKA-SR ResNet50]{
		\begin{minipage}[t]{0.24\linewidth}
			\centering
			\includegraphics[width=\linewidth]{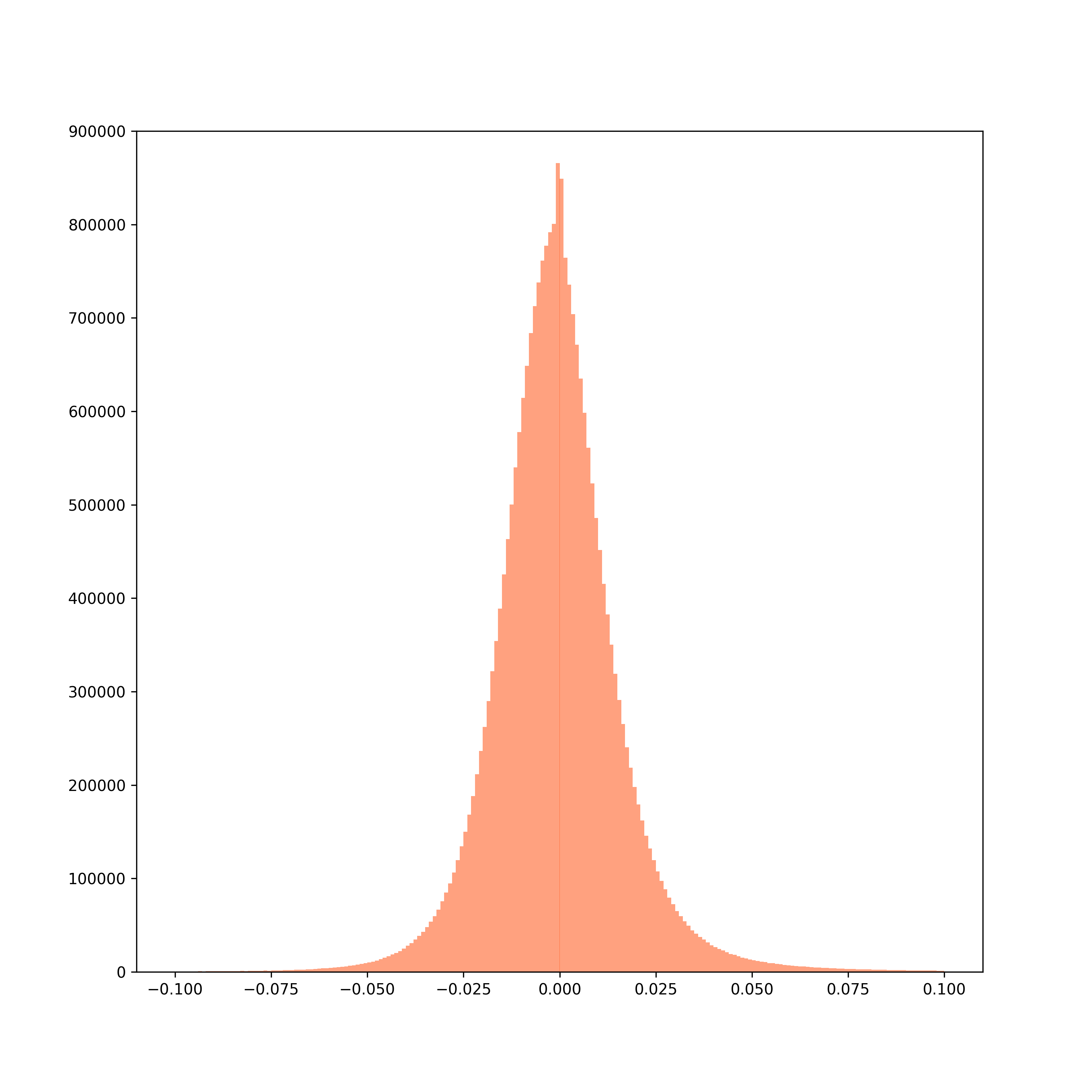}
			\label{fig:50cka}
		\end{minipage}%
	}%
	\caption{Parameter distributions of baseline and CKA-SR models (ResNet18 and ResNet50 models pre-trained on ImageNet)}
	\label{fig: premodels}
	\vspace{-10pt}
\end{figure}

It is concluded from Figure \ref{fig: premodels} and Table \ref{tab: premodels} that our CKA-SR increases both the sparsity (by making parameters more concentrated around 0) and the classification accuracy of ResNet18 and ResNet50 models trained on ImageNet.

\section{\textbf{Aug}CKA-SR}
\subsection{Definition}
\textbf{Aug}CKA-SR is an improved version of CKA-SR. We introduce the equation of \textbf{Aug}CKA-SR as follows, in which we use features with different shapes to calculate CKA-SR. 
\begin{equation}
    \label{eqn:augcka-sr}
    \begin{aligned}
    &\begin{aligned}
    \mathcal{L} &= \mathcal{L_{\mathcal{E}}} + \mathcal{L_{\mathcal{C}}} \\
    &= \mathcal{L_{\mathcal{E}}} + \beta \cdot \frac{2}{N(N-1)} \cdot \sum_{i=0}^{N-1} \sum_{j=i+1}^{N} \mathbf{CKA}_{Linear}(X_i,X_j)
    \end{aligned}\\
    \end{aligned}
\end{equation}
In Eq.~\eqref{eqn:augcka-sr}, $N$ is the total number of layers. In this Augmented CKA-SR, we calculate the similarity between each pair of different layers and utilize the average similarity in the loss function.

Then we provide the implementation of \textbf{Aug}CKA-SR below. 

\lstset{
  language=Python,
  basicstyle=\small\ttfamily,
  keywordstyle=\color{blue},
  stringstyle=\color{red},
  commentstyle=\color{green},
  numbers=right,
  numberstyle=\tiny,
  numbersep=5pt,
  breaklines=true,
  showstringspaces=false
}

\begin{lstlisting}[language=Python]
def centering(K):
    n = K.size(0)
    unit = torch.ones([n, n]).cuda()
    I = torch.eye(n).cuda()
    H = I - unit / n
    return torch.mm(torch.mm(H, K), H)


def linear_HSIC(X, Y):
    L_X = torch.mm(X, X.T)
    L_Y = torch.mm(Y, Y.T)
    return torch.sum(centering(L_X) * centering(L_Y))


def linear_CKA(X, Y):
    hsic = linear_HSIC(X, Y)
    var1 = torch.sqrt(linear_HSIC(X, X))
    var2 = torch.sqrt(linear_HSIC(Y, Y))
    return hsic / (var1 * var2)


class CKALoss(nn.CrossEntropyLoss):
    def __init__(self, ):
        super(CKALoss, self).__init__()

    def forward(self, blocks):
        blocklist = []
        losssum = torch.zeros(1).cuda()
        for i in range(len(blocks)):
            for j in range(len(blocks[i])):
                blocklist.append(blocks[i][j])
        # Calculate CKA similarity between each two layers
        for i in range(len(blocklist)-1):
            for j in range(i+1, len(blocklist)):
                X = blocklist[i].flatten(1).float()
                Y = blocklist[j].flatten(1).float()
                losssum += linear_CKA(X, Y)
        loss = 2*losssum/len(blocklist)/(len(blocklist)-1)
        return loss
\end{lstlisting}

\subsection{Experimental results}
We conduct experiments on our \textbf{Aug}CKA-SR. We plug our \textbf{Aug}CKA-SR into the training process of Random Sparse Training method. We adopt ResNet20 as the backbone, and apply sparsity ratios from 0.70 to 0.998 for fair comparisons. The results are in Table~\ref{tab:sparse}. 
\begin{table}[H]
  \small
  \caption{The accuracy (\%) when plugging CKA-SR and \textbf{Aug}CKA-SR to Random Sparse Training(RST) method on CIFAR-100 from scratch.}
  \label{tab:sparse}
  \centering
  \begin{adjustbox}{max width=\linewidth}
  \begin{tabular}{|l|l|l|l|l|l|l|l|}
    \hline
     \cellcolor{white}\multirow{2}{*}{Backbone} & \cellcolor{white}\multirow{2}{*}{Method} & \multicolumn{6}{c|}{Sparsity} \\
    \cline{3-8}
       &  & \textit{0.70} & \textit{0.85} & \textit{0.90} & \textit{0.95} & \textit{0.98} & \textit{0.998}\\
     \hline
    \multirow{3}{*}{ResNet20} & Random\cite{liu2021unreasonable} & 65.42 & 60.37 & 56.96 & 47.27 & 33.74 & 2.95 \\
     & +CKA-SR & 65.60 & 60.86 & \textbf{57.25} & 48.26 & 34.44 &\textbf{3.32}  \\
     & +\textbf{Aug}CKA-SR & \textbf{65.85} & \textbf{60.89} & 57.02 & \textbf{48.39} & \textbf{34.60} & 3.04  \\
    \hline
  \end{tabular}
  \end{adjustbox}
  \vspace{-10pt}
\end{table}

Table~\ref{tab:sparse} shows that \textbf{Aug}CKA-SR is effective in Random Sparse Training and outperforms original CKA-SR in most sparsity levels, which fully demonstrates the effectiveness of the improvement proposed by us.

\renewcommand\bibname{References}
\makeatletter
\renewenvironment{thebibliography}[1]
  {\section*{\bibname}%
   \@mkboth{\MakeUppercase\bibname}{\MakeUppercase\bibname}%
   \list{\@biblabel{\@arabic\c@enumiv}}%
        {\settowidth\labelwidth{\@biblabel{#1}}%
         \leftmargin\labelwidth
         \advance\leftmargin\labelsep
         \@openbib@code
         \usecounter{enumiv}%
         \let\p@enumiv\@empty
         \renewcommand\theenumiv{\@arabic\c@enumiv}}%
   \sloppy
   \clubpenalty4000
   \@clubpenalty \clubpenalty
   \widowpenalty4000%
   \sfcode`\.\@m}
  {\def\@noitemerr
    {\@latex@warning{Empty `thebibliography' environment}}%
   \endlist}
\makeatother
\bibliography{references}